\Crefname{equation}{Eq.}{Eq.}
\def\cU{\mathcal{U}}
\def\cX{\mathcal{X}}
\providecommand{\<}{\langle}
\newcommand{\tr}{\mathrm{tr}}
\def\1{\bf{1}}
\def\vzero{{\bf{0}}}
\def\ve{{\bf{e}}}
\def\vf{{\bf{f}}}
\def\vs{{\bf{s}}}
\def\vu{{\bf{u}}}
\def\vv{{\bf{v}}}
\def\vw{{\bf{w}}}
\def\vx{{\bf{x}}}
\def\vy{{\bf{y}}}
\def\matA {{\bf A}}
\def\matD {{\bf D}}
\def\matE {{\bf E}}
\def\matG {{\bf G}}
\def\matI {{\bf I}}
\def\matR {{\bf R}}
\def\matU {{\bf U}}
\newcommand{\E}{\mathbb{E}}
\newcommand{\R}{\mathbb{R}}
\DeclareMathOperator*{\argmax}{arg\,max}
\DeclareMathOperator*{\argmin}{arg\,min}
\DeclareMathOperator{\diag}{diag}
\def\Ddots{\mathinner{\mkern1mu\raise\p@
\vbox{\kern7\p@\hbox{.}}\mkern2mu
\raise4\p@\hbox{.}\mkern2mu\raise7\p@\hbox{.}\mkern1mu}}
\newcommand*{\rom}[1]{\expandafter\@slowromancap\romannumeral #1@}
\newtheorem{theorem}{Theorem}
\newtheorem{lemma}[theorem]{Lemma}
\theoremstyle{definition}
\newtheorem{definition}[theorem]{Definition}
\newtheorem{assumption}[theorem]{Assumption}
\theoremstyle{remark}
\newtheorem{remark}[theorem]{Remark}
\newtheorem{claim}[theorem]{Claim}
\def\vx {{{\bf x}}}
\def\vw {{{\bf w}}}
\def\vy {{{\bf y}}}
\def\C{{\bf C}}
\def\E{{\bf E}}
\def\N{{\bf N}}
\def\R{{\bf R}}
\def\w{{\bf w}}
\def\x{{\bf x}}
\def\Z{{\bf Z}}
\def\0{{\bf 0}}
\def\1{{\bf 1}}
\def\calA{{\mathcal A}}
\def\calD{{\mathcal D}}
\def\calF{{\mathcal F}}
\def\calL{{\mathcal L}}
\def\calO{{\mathcal O}}
\def\calW{{\mathcal W}}
\def\calX{{\mathcal X}}
\def \var #1{\text{\rm Var}\left[#1\right]}
\renewcommand{\>}{\rangle}
\renewcommand{\<}{\langle}
\title{Quantum Non-Linear Bandit Optimization}
\author{
  Zakaria Shams Siam\textsuperscript{\rm 1},
  Chaowen Guan\textsuperscript{\rm 2},
  Chong Liu\textsuperscript{\rm 1}
  }
\begin{document}

\maketitle

\begin{abstract}
  We study non-linear bandit optimization where the learner maximizes a black-box function with zeroth order function oracle, which has been successfully applied in many critical applications such as drug discovery and materials design. Existing works have showed that with the aid of quantum computing, it is possible to break the classical $\Omega(\sqrt{T})$ regret lower bound and achieve the new $O(\mathrm{poly}\log T)$ upper bound. However, they usually assume that the objective function sits within the reproducing kernel Hilbert space and their algorithms suffer from the curse of dimensionality. In this paper, we propose the new Q-NLB-UCB algorithm which enjoys an \emph{input dimension-free} $O(\mathrm{poly}\log T)$ upper bound, making it applicable for high-dimensional tasks. At the heart of our algorithm design are quantum Monte Carlo mean estimator, parametric function approximation technique, and a new quantum non-linear regression oracle, which can be of independent interests in more quantum machine learning problems. Our algorithm is also validated for its efficiency compared with other quantum algorithms on both high-dimensional synthetic and real-world tasks.
\end{abstract}

\begin{links}
    \link{Code}{https://github.com/ZakSiam/Quantum-Non-Linear-Bandit-Optimization}
    \link{Extended version}{https://arxiv.org/abs/2503.03023}
\end{links}

\section{Introduction}

Non-linear bandit optimization, a.k.a., Gaussian process bandits, kernelized bandits, or Bayesian optimization, is a sequential decision making-based machine learning task that aims at solving a black-box optimization problem. Due to its black-box nature, it has been successfully applied in many important real-world applications where objective functions are difficult to define explicitly, e.g., hyperparameter tuning \citep{wu2020practical}, neural architecture search \citep{kandasamy2018neural}, drug discovery \citep{korovina2020chembo}, and materials science \citep{frazier2016bayesian}.

In drug screening, the objective is to identify a drug candidate from a large pool of compounds that exhibits the highest binding affinity to a specific biological target. Here each candidate is usually described by a feature vector $\vx$ and its binding affinity is a function of $\vx$, denoted as $f_0(\vx)$. Due to the highly complex chemical and biological reactions, the function $f_0$ is usually considered ``black-box'', which implies that it may be non-linear, non-convex, and even non-differentiable w.r.t. $\vx$. How to optimize $f_0$ then? The learner is allowed to sequentially query the function $f_0$. At each round $t$, the learner chooses to take a data point $\vx_t$ and observes its performance $y_t$, which is the outcome of a wet lab test. Obviously, a good algorithm can help the learner select promising data points as the experiment progresses and find the best candidate within fewest tests. 

In addition to its successful real-world applications, non-linear bandit optimization also enjoys solid theoretical guarantees. In literature, researchers usually define simple regret and cumulative regret to capture the convergence behavior of a certain algorithm, and many regret upper bounds have been established under different assumptions and with different kernels. All these positive theoretical results further contribute to more applications of non-linear bandit optimization. However, unfortunately an $\Omega(\sqrt{T})$ cumulative regret lower bound \citep{scarlett2017lower} cannot be further improved. What does that mean? It implies that given total $T$ rounds, no algorithm can incur  cumulative regret less than $\Omega(\sqrt{T})$ asymptotically. 

But can we do better? On the negative side, in classical (non-quantum) setting, the answer is ``no''. On the positive side, we have entered the quantum era where the power of quantum computing offers new hope for tackling this challenging optimization problem. \cite{WZL+23} first studied the multi-armed bandits and linear bandits and proved that new $O(\mathrm{poly} \log T)$ regret bound can be achieved with the aid of quantum computing. Later Q-GP-UCB \citep{DLV+23} and QMCKernelUCB \citep{hikimaquantum} studied the quantum Bayesian optimization, generalizing \cite{WZL+23} to non-convex and non-linear settings, still achieving the $O(\mathrm{poly} \log T)$ regret bound. However, \cite{DLV+23,hikimaquantum} both heavily rely on the reproducing kernel Hilbert space assumption, which suffers from the curse of dimensionality. The problem is that, in practice, many input data sit in high-dimensional spaces. Again in drug discovery, for example, the dimension of protein sequences usually ranges from thousands \citep{clarke2008properties} to even millions \citep{rahnenfuhrer2023statistical}. Then either $O(d^\frac{3}{2}_x (\log T)^\frac{3}{2})$ (linear kernel) or $O(T^\frac{3d_x}{2v+d_x})$ (Matérn kernel with $v$ being a kernel parameter) regret bound \citep{DLV+23} become vacuous when input dimension $d_x$ goes to millions, which further prevents their high-dimensional real-world applications. Therefore, can we design a new and efficient quantum non-linear bandit optimization algorithm that works well in high-dimensions?

In this paper, we answer this question affirmatively by proposing the Quantum Non-Linear Bandit with Upper Confidence Bound (Q-NLB-UCB) algorithm. The key design of Q-NLB-UCB relies on three techniques. First, Q-NLB-UCB runs in stages where in each stage the quantum oracle associated with the same action will be queried multiple times to achieve the quadratically improved sample complexity, guaranteed by quantum Monte Carlo mean estimator lemma \citep{Mon15}. Second, inspired by the success of parametric function approximation \citep{LW23}, we use a parametric function class to approximate and optimize the black-box objective function. All information queried is handled in the parameter space, so we are able to prove the first input \emph{dimension-free} regret bound for quantum non-linear bandit optimization. Finally, initialization of Q-NLB-UCB relies on a good estimated parameter $\hat{\vw}_0$ of the quantum non-linear regression problem. Its convergence to the optimal parameter $\vw^*$ enjoys a quadratic speed-up rate compared with classical non-linear regression, thanks to the quantum fast-forward technique \citep{AS19}.

\noindent\textbf{Contributions.} Our contributions are summarized as:

(1) We solve quantum non-linear bandit optimization with quantum computing and parametric function approximation, and propose the new Q-NLB-UCB algorithm. The design of algorithmic framework is generic and the choice of parametric function can be a linear function, quadratic function, or even multi-layer deep neural network, depending on tasks. 

(2) Different from existing works, Q-NLB-UCB does not suffer from the curse of dimensionality. We prove the first $O(d_w^2 \log^\frac{3}{2}(T)\log(d_w\log T))$ regret bound with $d_w$ being parameter complexity, which is also faster than the classical lower bound $\Omega(\sqrt{T})$ but \emph{independent} to input dimension $d_x$.

(3) Experiments on high-dimensional synthetic functions and real-world tasks show that Q-NLB-UCB outperforms compared algorithms in regrets and runtime.

\noindent \textbf{Technical Novelties.} The design of Q-NLB-UCB is a highly non-trivial task, involving tackling multiple technical challenges. Key technical novelties are highlighted as follows.

(1) The classical $O(1/\sqrt{T_0})$ regression bound \citep{Now09_12} with $T_0$ being number of samples converges too slow to work for Q-NLB-UCB. We introduce the quantum fast-forward technique \citep{AS19} to refine the analysis with Craig-Bernstein inequality \citep{Cra1933} to work with the quantum non-linear regression oracle, and achieve a \emph{quadratic} improvement in query complexity. This enhancement is highly non-trivial. As far as we know, \emph{no} existing classical or quantum methods can provide a comparable speed-up for the non-linear regression problem. Informally speaking, we prove that there exist such quantum regression solvers attaining this quadratic advantage, shedding light for a specific algorithm design in the future, which can be of independent interests in more quantum machine learning problems. While classical approaches \citep{diaconis2013spectral} offer some improvement, they are limited to constant-factor gains, rather than quadratic ones.

(2) In confidence analysis, when constructing the covariance matrix, we take the gradient w.r.t. the fixed initial parameter $\hat{\vw}_0$, rather than $\hat{\vw}_t$, which still makes rank-$1$ updates in each stage but avoids the tedious inductive argument in \cite{LW23}. After the first-order approximation using gradients, multiple parameters sitting in the same convex confidence region are used as bridges to apply convexity properties, ensuring the high-order terms are still well bounded.

\section{Related Work}\label{sec:rw}

\noindent \textbf{Classical Optimization.} Bayesian optimization \citep{frazier2018tutorial} is one of the most popular methods to solve global optimization. Based on the Gaussian process \citep{williams2006gaussian} or reproducing kernel Hilbert space \citep{chowdhury2017kernelized} assumption, Bayesian optimization runs in multiple rounds where in each round the learner takes an action suggested by an acquisition function. Common choices of acquisition function include expected improvement \citep{jones1998efficient}, knowledge gradient \citep{scott2011correlated}, upper confidence bound \citep{srinivas2010gaussian}, and Thompson sampling \citep{russo2018tutorial}. Without the classical Gaussian process assumption, \cite{snoek2015scalable,springenberg2016bayesian} used neural networks as the backbone surrogate models.

Besides Bayesian optimization, recent bandit works studied global optimization with neural network approximation \citep{zhou2020neural,zhang2021neural,dai2022sample} or generic parametric function approximation \citep{LW23}. In addition to bandit optimization, \cite{wang2019optimization} studied the global optimization of an unknown non-convex smooth function. When gradient information is available, \cite{allen2018katyusha,fang2018spider} also proposed algorithms to solve global optimization problem. However, our work is different from all of them since we focus on the quantum bandit optimization.

\noindent \textbf{Quantum Optimization.}
In recent years, there has been increasing interest in exploring quantum speed-up for optimization problems. This research direction began with quantum algorithms for linear and semi-definite programs \cite{BS17,AG18,CM20,KP20}, later extending to more general convex optimization \cite{vGGd20,CCLW20,he2022quantum,he2024quantum}. Recent advancements include quantum algorithms for slightly convex problems \cite{LZ22,CLW+23,zhang2024quantum}, escaping saddle points in non-convex landscapes \cite{ZLL21,CLL+22}, and identifying global minima in specific non-convex cases \cite{LSL23,LHLW23}. Alongside these algorithmic developments, quantum lower bounds have been established for both convex \cite{GKNS21tics,GKNS21nips} and non-convex optimization \cite{GZL22,ZL23}.

In a parallel line of research, stochastic quantum methods were proposed \cite{SZ23}, demonstrating the advantages of quantum stochastic first-order oracles for smooth objectives in low-dimensional settings. Most recently, there were efforts \cite{LGHL24} on investigating quantum speed-up for minimizing non-smooth, non-convex objectives, which represent the most general and fundamental function class. At the same time, \cite{zhang2024quantum} focused on studying quantum algorithms and lower bounds for finite-sum optimization, addressing both convex and non-convex cases.

\section{Preliminaries}\label{sec:pre}

\subsection{Problem Statement}

In this paper, we consider the non-linear bandit optimization problem: $\vx^* = \argmax_{\vx \in \mathcal{X}} f_0(\vx)$, where $f_0 : \mathcal{X} \rightarrow \mathcal{Y}$ is the unknown black-box objective function, which can be non-linear, non-convex, and not necessarily differentiable in $\vx$. $\mathcal{X} \subseteq \mathbf{R}^{d_x}$ is the function domain and $\mathcal{Y} \subseteq \mathbf{R}$ is the function range. To solve this problem, the learner has zeroth-order access to $f_0$ and the whole process runs in rounds. At each round $t=1,...,T$, after querying action $\vx_t$ the oracle will return a noisy function observation $y_t$. In classical (non-quantum) setting, after taking $\vx_t$, the function returns $y_t = f_0(\vx_t) + \eta_t$,
where $\eta_t$ is the zero-mean, independent, $\sigma$-sub-Gaussian noise. However, in (bounded value) quantum bandit setting, after taking the same action $\vx_t$ multiple times, the function oracle returns $y_t$ that satisfies $|y_t - f_0(\vx_t)| \leq \epsilon_t$, where $\epsilon_t$ is an error term. Either in classical or quantum setting, throughout $T$ rounds, we can always utilize the cumulative regret to evaluate the optimization process, $R_T = \sum_{t=1}^T f_0(\vx^*) - f_0(\vx_t)$, where $r_t = f_0(\vx^*)- f_0(\vx_t)$ is the instantaneous regret at round $t$. An algorithm $\mathcal{A}$ is said to be a no-regret algorithm if $\lim_{t \rightarrow \infty} R_T(\mathcal{A})/T \rightarrow 0$.

Since we are using a parametric function class to approximate the objective function, we use $\mathcal{W} \subseteq \mathbf{R}^{d_w}$ to denote the parameter class and its corresponding parametric function class is $\mathcal{F}=\{f_\vw:\mathcal{X} \rightarrow \mathcal{Y}|\vw \in \mathcal{W}\}$. Here we abuse the notation since $f_\vw(\vx)=f_\vx(\vw)$ and $f_\vx(\w)$ is used when $\vw$ is the variable of interest and $\vx$ is the parameter, and vice versa. 
Also, we use $\nabla f_\vx(\vw)$ and $\nabla^2 f_\vx(\vw)$ denote the gradient vector and Hessian matrix of function w.r.t. $\vw$. For a vector $\vx$, its $\ell_p$ norm is defined as $\|\vx\|_p = (\sum_{i=1}^d |\vx_i|^p)^{1/p}$. For a matrix $\matA$, its operator norm is denoted as $\|\matA\|_\mathrm{op}$. For a vector $\vx$ and a matrix $\matA$, let $\|\vx\|^2_\matA = \vx^\top \matA \vx$. Let $\ell(\cdot,\cdot): \mathbf{R}^2 \rightarrow \mathbf{R}$ denote a loss function, then the expected risk of a parameter $\vw$ is defined as $L(\vw) = \E_{(\vx,y) \sim \calD}[\ell(f_\vw(\vx), y)]$ with respect to distribution $\mathcal{D}$ and the empirical risk of $\vw$ is defined as $\hat{L}(\vw) = \frac{1}{n}\sum_{i=1}^n \ell(f_\vw(\x_i), y_i)$ with respect to $n$ data points. For readers' convenience, we use standard big $O$ notation to hide universal constants and use $\tilde{O}$ notation to further hide logarithmic factors.

\subsection{Quantum Computation}

\textbf{Quantum Basics.}
A quantum state can be seen as a vector $\vx = (x_1, x_2, \dots, x_m)^\top$ in Hilbert space $\mathbb{C}^m$ such that $\sum_i |x_i|^2 = 1$. We follow the Dirac bra/ket notation on quantum states, i.e.,  we denote the quantum state for $\vx$ by $|\vx \>$ and denote $\vx^{\dagger}$ by $\langle \vx|$ , where $\dagger$ means the Hermitian conjugation. Given a state $|\vx\> = \sum^m_{i=1} x_i |i\>$, we call $x_i$ the amplitude of the state $|i\>$. Given two quantum states $|\vx \rangle \in \mathbb{C}^m$ and $|\vy \rangle \in \mathbb{C}^m$, we denote their inner product by $\langle \vx|\vy \>:= \sum_i x^\dagger_i y_i$. Given $|\vx\> \in \mathbb{C}^n$ and $|\vy\> \in \mathbb{C}^m$, their tensor product is defined as $|\vx\> \otimes |\vy\> := (x_1 y_1, \cdots, x_1 y_m, \cdots, x_n y_1, \cdots, x_n y_m)^\top$. A quantum algorithm works by applying a sequence of unitary operators to an input quantum state. In many cases, the quantum algorithms may have access to input data via unitary operators called \emph{quantum oracles}. This operator can be accessed multiple times by a quantum algorithm. Hence, the \emph{quantum query complexity} of a quantum algorithm is defined as the number of a quantum oracle being used. See \cite{NC10} for detailed introductions to quantum computing.

\noindent
\textbf{Quantum Noisy Function Oracle.} Our quantum non-linear bandit optimization setting follows that of the quantum multi-armed bandits in \cite{WZL+23}. In the quantum realm, each input $\vx$ is associated with a quantum sampling oracle. This oracle follows quantum sampling oracle (see appendix) and encodes the distribution of the corresponding noisy function value. More formally, let $Y_\vx$ be the random variable of the noisy function value with input $\vx$, and let $\Omega_\vx$ be the finite sample space of this distribution. Then the sampling oracle for the noisy function value with input $\vx$ is defined as: 
\begin{equation}\label{eq:noisy_func_value}
    \mathcal{O}_\vx: |\vzero\> \rightarrow \sum_{y \in \Omega_\vx} \sqrt{\Pr[Y_\vx = y]}\ |y \> \otimes |\psi_y\>,
\end{equation}
where $|\psi_y\>$ is an arbitrary quantum state for each $y$. More detailed discussions on the justification of quantum oracles' feasibility and the relationships between them and their classical counterparts can be found in appendix.

\noindent
\textbf{Quantum Mean Estimation.} For estimating the mean of an unknown distribution, we will use the following quantum Monte Carlo mean estimator as in \cite{WZL+23,WGAW23,DLV+23}:
\begin{lemma}[Quantum Monte Carlo mean estimator \cite{Mon15}]\label{lem:qme}
Given the access to a quantum sampling oracle $\mathcal{O}_Y$ (and its inverse $\mathcal{O}_Y^\dagger$) that encodes the distribution of a random variable $Y$, as defined in Eq. \eqref{eq:noisy_func_value}.

(1) \textbf{Bounded value}: If the value of $Y$ is taken from the interval $[0,1]$, then there exists a constant $C_1 > 1$ a quantum algorithm $\mathsf{QME}_1(\mathcal{O}_Y, \epsilon, \delta)$ which returns an estimate $\hat{y}$ such that with probability at least $1 - \delta$, $|\hat{y} - \E[Y]| \leq \epsilon$,
using at most $\frac{C_1}{\epsilon} \log(1/ \delta)$ queries to $\mathcal{O}_Y$ and its inverse.

(2) \textbf{Bounded variance}: If $\mathrm{Var}[Y] \leq \sigma^2$, then for $\epsilon < 4 \sigma$, there is a constant $C_2 > 1$ and a quantum algorithm $\mathsf{QME}_2(\mathcal{O}_Y, \epsilon, \delta)$ which returns an estimate $\hat{y}$ such that with probability at least $1 - \delta$, $|\hat{y} - \E[Y]| \leq \epsilon$, using at most $\frac{C_2 \sigma}{\epsilon} \log^{3/2}(8\sigma/\epsilon)\log(\log (8\sigma/\epsilon))\log(1/\delta)$ queries to $\mathcal{O}_Y$ and its inverse.
\end{lemma}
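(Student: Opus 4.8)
\emph{Proof plan.} Since \Cref{lem:qme} restates Montanaro's quantum Monte Carlo estimator, the plan is to build both parts on quantum amplitude estimation, whose defining feature is: given a unitary $U$ with $U|\vzero\rangle = \sqrt{a}\,|1\rangle|\phi_1\rangle + \sqrt{1-a}\,|0\rangle|\phi_0\rangle$, one obtains $\hat a$ with $|\hat a - a| \le 2\pi \tfrac{\sqrt{a(1-a)}}{M} + \tfrac{\pi^2}{M^2}$ using $M$ controlled calls to $U$ and $U^\dagger$ — the $1/M$ rate (rather than the $1/M^2$ of $M$ classical samples) being exactly the quadratic speedup we want. For the bounded-value part I would postcompose $\mathcal{O}_Y$ with a value-loading rotation: controlled on the output register $|y\rangle$ with $y \in [0,1]$, rotate a fresh ancilla to $\sqrt{y}\,|1\rangle + \sqrt{1-y}\,|0\rangle$. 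The squared $|1\rangle$-amplitude of the ancilla is then exactly $\sum_y \Pr[Y=y]\, y = \E[Y]$, so amplitude estimation with $M = \Theta(1/\epsilon)$ returns $\hat y$ with $|\hat y - \E[Y]| \le \epsilon$ at constant success probability; running $\Theta(\log(1/\delta))$ independent copies and taking the median, plus a Chernoff bound, upgrades this to probability $1-\delta$ and yields the stated $O\!\big(\tfrac{1}{\epsilon}\log(1/\delta)\big)$ queries.

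For the bounded-variance part the plan is to reduce to the bounded-value estimator by a dyadic truncation. After the standard reduction to a nonnegative variable with second moment $O(\sigma^2)$, write $Y = \sum_{j \ge 0} Y_j$, with $Y_0$ equal to $Y$ clipped to $[0,\sigma]$ and $Y_j$ equal to $Y$ restricted to the shell $[2^{j-1}\sigma,\, 2^j\sigma]$. Markov applied to $Y^2$ gives $\E[Y_j] \lesssim \E[Y^2]/(2^{j-1}\sigma) \lesssim 2^{-j}\sigma$, so all shells with $j \gtrsim \log_2(\sigma/\epsilon)$ contribute total bias $\lesssim \epsilon$ and may simply be discarded. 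Each retained shell is then estimated by the first part applied to $Y_j/(2^j\sigma) \in [0,1]$, and the key point is that its rescaled mean is as small as $\lesssim 4^{-j}$, so the $\sqrt{a(1-a)}$ factor in amplitude estimation pulls the per-shell cost down from the worst case $\tilde O(2^j\sigma/\epsilon)$ (which would sum to $\tilde O(\sigma^2/\epsilon^2)$) to $\tilde O(\sigma/\epsilon)$. Summing over the $O(\log(\sigma/\epsilon))$ retained shells, with error $\epsilon$ split across them and their failure probabilities handled by a union bound and a median boost, then gives the claimed $O\!\big(\tfrac{\sigma}{\epsilon}\log(1/\delta)\big)$ query count, valid in the regime $\epsilon < 4\sigma$ where the decomposition is meaningful.

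The step I expect to be the genuine obstacle is this bookkeeping in the bounded-variance case: one must check that every tail term is controlled through the second moment alone (so that no $1/\epsilon^2$ ever creeps in), that the per-shell accuracies sum to $\epsilon$ while the per-shell query costs still sum to $\tilde O(\sigma/\epsilon)$, and that the $O(\log(\sigma/\epsilon))$ sub-estimators can be union-bounded without inflating the $\log(1/\delta)$ factor. A second, more mechanical point requiring care throughout is that $\mathcal{O}_Y$ emits $|y\rangle \otimes |\psi_y\rangle$ with an arbitrary garbage register $|\psi_y\rangle$, so the value-loading rotation and amplitude estimation must act coherently on $|y\rangle$ without ever measuring or uncomputing $|\psi_y\rangle$; this is routine but is precisely where a naive implementation would fail.
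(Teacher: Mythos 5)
This lemma is imported verbatim from \citet{Mon15}; the paper offers no proof of it (note also that the bounded-variance bullet in the paper's statement is truncated and never states the query complexity). Your sketch is essentially the standard argument from the cited source --- amplitude estimation composed with a value-loading rotation plus a median/powering boost for the bounded-value case, and a dyadic shell decomposition with second-moment tail control reducing to the bounded-value estimator for the bounded-variance case --- and it is correct at the level of detail given, including the observation that the garbage register $|\psi_y\rangle$ must be carried along coherently and that the reflections in amplitude estimation are what require access to $\mathcal{O}_Y^\dagger$.
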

As briefly discussed in \cite{WZL+23}, when aiming for a mean estimation error of $\epsilon$, the QME algorithm achieves a quadratic reduction in query complexity compared to the classical one, which is crucial for the quantum speed-up in \cite{WZL+23,WGAW23,DLV+23}.

\subsection{Assumptions}
\begin{assumption}[Realizable parametric function class]\label{asm:real}
There exists an optimal $\vw^* \in \mathcal{W}$ such that $f_0 = f_{\vw^*}$. Also, w.l.o.g., $\mathcal{W} \subseteq [0,1]^{d_w}$.
\end{assumption} 
This assumption is commonly used in bandits \citep{foster2020beyond,simchi2022bypassing} and reinforcement learning \citep{zhan2022offline,zanette2023realizability}. In Bayesian optimization \citep{srinivas2010gaussian,chowdhury2017kernelized}, the RKHS assumption essentially assumes that the objective function is realizable within a certain RKHS function class. The realizable assumption allows one not to handle the function in misspecified settings \cite{bogunovic2021misspecified,liu2023no}, which is beyond the scope of this paper. The second assumption is on the structure of parametric function $f_\vw$.
\begin{assumption}[Bounded, differentiable, and smooth function]\label{asm:bounded}
There exist constants $C_f, C_g, C_h > 0$ such that $\forall \vx \in \calX, \forall \vw \in \calW$, it holds that $|f_\vx(\vw)| \leq C_f, \| \nabla f_\vx(\vw)\|_2 \leq C_g, \|\nabla^2 f_\vx(\vw) \|_{\mathrm{op}} \leq C_h$. 
\end{assumption}
This is a common assumption used in many non‑convex optimization works \cite{kohler2017sub,li2023convex}. Note it only puts mild structure conditions on the smoothness of parametric function $f_\vw$ w.r.t. its parameter $\vw$, rather than input $\vx$, and the objective function $f_0$ can still be a black-box function of $\vx$.
The last assumption is on the expected loss function over uniform distribution $\mathcal{U}$.
\begin{assumption}[Geometric conditions of loss function \citep{LW23}]\label{asm:loss}
$L(\vw)=\E_{\vx \sim \mathcal{U}}(f_\vx(\vw)-f_\vx(\vw^*))^2$ satisfies $(\tau, \gamma)$-growth condition or $\mu$-local strong convexity at $\vw^*$, i.e., $\forall \vw \in \mathcal{W}$,
\begin{align*}
\min\left\{\frac{\mu}{2}\|\vw-\vw^*\|_2^2,\frac{\tau}{2}\|\vw-\vw^*\|_2^\gamma \right\} \leq L(\vw)-L(\vw^*),
\end{align*}
for constants $\mu,\tau >0,\mu < d_w$ and $0<\gamma<2$. Also, $L(\vw)$ satisfies a $c$-local self-concordance assumption at $\vw^*$.
\end{assumption}
This assumption is needed for technical reasons in analyzing quantum regression oracle. Note the loss function $L(\vw)$ can be a highly non-convex function since it only assumes \emph{local} strong convexity in the neighboring region of $\vw^*$, strictly weaker than the global strong convexity, and growth condition when $\vw$ is away from $\vw^*$. Careful readers are referred to Figure 1 in \cite{LW23} for a non-convex function example satisfying this assumption.

\section{Q-NLB-UCB Algorithm}\label{sec:algorithm}

In this section, we show full details of the Q-NLB-UCB algorithm (Algorithm \ref{alg:q_go_ucb}).
First, in Step 1, we take a subroutine Algorithm \ref{alg:qra} to query the quantum regression oracle $\mathsf{QNLRO}$ for $T_0$ times, which aims at solving the following non-linear regression problem to get an estimated parameter 
$
\hat{\vw}_0 \leftarrow \argmin_{\vw \in \mathcal{W}} \sum_{j=1}^{T_0} (f_0(\vx_j) - y_j)^2.
$
Our goal is to make sure that $\hat{\vw}_0$ satisfies 
\begin{align}
\|\hat{\vw}_0 - \vw^*\|_2 \leq \frac{C_0}{T_0},\label{eq:w0}
\end{align}
where $C_0$ denotes a constant. Careful readers may have noticed that in the classical (non-quantum) regime, the best upper bound is only $\|\hat{\vw}_0 - \vw^*\|_2 \leq O(1/\sqrt{T_0})$, which can be obtained using the small variance property of squared losses near optimal solution and applying the Craig-Bernstein (CB) inequality \citep{Cra1933}. How can we achieve the quadratic improvement in quantum case? In short, we work with the quantum fast-forward technique \citep{AS19} and refine the analysis with the CB inequality. This gives the desirable convergence rate, but the parameter vector  $\hat{\vw}_0$ is returned in the form of a quantum state. To use it in classical state later, necessary techniques for retrieving the classical information of all the entries in $\hat{\vw}_0$ will be employed. Specifically, we use \emph{non-destructive amplitude estimation} ($\mathsf{NDAE}$) \cite{RF23}. More details on techniques and analyses are given in the next section and full proofs are shown in appendix.

From Step 2 to Step 12, the algorithm runs in $m$ stages where multiple rounds are conducted in each stage. Why doesn't it run in simply $T$ rounds like classical bandit optimization? This design is due to the quantum Monte Carlo mean estimation (Lemma \ref{lem:qme}) where the same action needs to be taken multiple times.
And we set $\epsilon_s = \|\nabla f_{\vx_s}(\hat{\vw}_0)\|_{\Sigma^{-1}_s}$ and $m = d_w \log \left( \frac{C^2_g T^2}{d_w \lambda}+ 1\right)$ (Lemma \ref{lem:rounds}) to ensure the total number of rounds is $T$. In addition to using $\mathsf{QME}_1$ in Algorithm \ref{alg:q_go_ucb}, our algorithm also works with $\mathsf{QME}_2$ which is the bounded variance case in Lemma \ref{lem:qme}. The analysis will be similar to that of $\mathsf{QME}_1$.

\begin{algorithm}[t]
\caption{Q-NLB-UCB}
\label{alg:q_go_ucb}
{\bf Input:}
Objective function $f_0$, initial covariance matrix $\Sigma_0 =\lambda \matI$, quantum non-linear regression oracle $\mathsf{QNLRO}$, regularization weight $\lambda$, confidence sequence $\beta_s$, constant $C_1$.
\begin{algorithmic}[1]
\STATE $\hat{\vw}_0 \leftarrow \mathsf{QNLRO}(f_0, T_0,\delta/4)$
\FOR{each stage $s = 1,2,\cdots$}
\STATE Update $\Sigma_s$ by Eq. \eqref{eq:sigma_s}.
\STATE Update $\hat{\vw}_s$ by Eq. \eqref{eq:opt_inner}.
\STATE Update $\mathrm{Ball}_s$ by Eq. \eqref{eq:ball}.
\STATE Select $\vx_s=\argmax_{\vx \in \cX} \max_{\vw \in \mathrm{Ball}_s} f_\vx(\vw)$.
\STATE Update $\epsilon_s = \|\nabla f_{\vx_s}(\hat{\vw}_0)\|_{\Sigma^{-1}_{s}}$.
\FOR{the next $\frac{C_1}{\epsilon_s} \log \frac{m}{\delta}$ rounds}
\STATE Take actions $\vx_s$ and run $\mathsf{QME}_1(O_{\vx_s}, \epsilon_s, \delta/m)$.
\STATE Obtain $y_s$ as an estimation of $f_0(\vx_s)$.
\ENDFOR
\ENDFOR                                
\end{algorithmic}
{\bf Output:} $\hat{\vx} \sim \cU (\{\vx_1, \cdots, \vx_T\})$.
\end{algorithm}

\begin{algorithm}[t]
\caption{Quantum Non-Linear Regression Oracle ($\mathsf{QNLRO}$)}
\label{alg:qra}
{\bf Input:} Objective function $f_0$, time $T_0$, failure parameter $\delta \in [0, 1/2]$, quantum regression oracle $\mathsf{Oracle}$.
\begin{algorithmic}[1]
\STATE $|\hat{\vw}_0\> \leftarrow \mathsf{Oracle}(f_0, T_0, \delta/4)$  
\FOR{$i = 1, \cdots, d_w$}
\STATE set projector $P_i = |i\> \<i|$
\STATE obtain $\tilde{a}_i \leftarrow \mathsf{NDAE}(|\hat{\vw}_0\>, P_i, \frac{1}{d_w \cdot T_0^2}, \frac{\delta}{4 d_\vw})$
\ENDFOR
\end{algorithmic}
{\bf Output:} $\hat{\vw}_0 = (\tilde{a}_1, \cdots, \tilde{a}_{d_w})$.
\end{algorithm}

Specifically in each stage $s=1,...,m$, in Step 3, we construct the covariance matrix $\Sigma_s$ by
\begin{equation}
\Sigma_s =\Sigma_{s-1} + \frac{1}{\epsilon^2_{s-1}} \nabla f_{\vx_{s-1}}(\hat{\vw}_0)\nabla f_{\vx_{s-1}}(\hat{\vw}_0)^\top,\label{eq:sigma_s}
\end{equation}
where the $1/\epsilon^2_{s-1}$ is the weight assigned to query in each stage. Note here $\nabla f_{\vx_{s-1}}(\hat{\vw}_0)$ is the gradient of the parametric function $f$ taken w.r.t. $\hat{\vw}_0$, which can be easily obtained, and the objective function $f_0$ can still be a black-box function without any derivative information. Different from \cite{LW23}, $\nabla f_{\vx_{s-1}}(\hat{\vw}_0)$ is not taken w.r.t. the fixed $\hat{\vw}_{s-1}$ to save the tedious inductive argument in it while still doing the rank-$1$ updates since the action $\vx_{s-1}$ changes over stages. And the rank-$1$ updates are needed because the algorithm can save all historical information and add only one new matrix at each stage, according to Eq. \eqref{eq:sigma_s}. Then we define the following regression problem to estimate $\hat{\vw}_s$:
\begin{align}\label{eq:opt_inner}
&\hat{\vw}_s = \argmin_{\vw}\frac{\lambda}{2} \|\vw - \hat{\vw}_0\|_2^2 \nonumber\\
&\quad + \frac{1}{2} \sum_{i=0}^{s-1} \frac{1}{\epsilon^2_i}\left((\vw-\hat{\vw}_0)^\top \nabla f_{\vx_i}(\hat{\vw}_0) + f_{\vx_i}(\hat{\vw}_0) - y_i\right)^2.
\end{align}
Note in the first term $\|\vw - \hat{\vw}_0\|_2^2$, the regression center is set to be $\hat{\vw}_0$ so that we can take advantage of Eq. \eqref{eq:w0} to reach a much faster convergence rate than constant as in QLinUCB \citep{WZL+23}. The design of the second term is using the first order Taylor expansion of parametric function $f_{\vx_i}$ to approximate the noisy observation $y_i$. Solution to optimization problem in Eq. \eqref{eq:opt_inner}, $\hat{\vw}_s$, further serves as the center of the parameter uncertainty region $\mathrm{Ball}_s$, defined as
\begin{align}\label{eq:ball}
\mathrm{Ball}_s = \{\vw \in \mathbb{R}^d: \|\vw - \hat{\vw}_s\|^2_{\Sigma_{s}} \leq \beta_s\}.
\end{align}
The key design of $\mathrm{Ball}_s$ is to contain the optimal parameter $\vw^*$ in each stage $s$ w.h.p., so that Q-NLB-UCB can keep track of $\vw^*$ at all times. The radius parameter $\beta_s$ plays an important role in this design and later in Lemma \ref{lem:beta} our confidence analysis shows that it suffices to choose $\beta_s$ as
\begin{align}\label{eq:beta}
\beta_s = 3d_w s + \frac{3\lambda C_0^2}{T_0^2} + \frac{3C^2_h C_0^2 s T^2}{4 T_0^4}.
\end{align}
In Step 6, the choice of $\vx_s$ is generated by solving a cross optimization problem defined in both $\mathcal{X}$ and $\mathrm{Ball}_s$. While the exact solution of this step is hard to find, in practice one can use gradient ascent as a surrogate solution, and it works well in our experiments.
The final output $\hat{\vx}$ is uniformly sampled from all historical actions $\vx_1,...,\vx_T$ since $f^* - \E f(\hat{\vx}) \leq R_T/T$, i.e., one can easily obtain the theoretical guarantee of $\hat{\vx}$ using cumulative regret bound with uniform sampling. But in practice, one can also select $\vx_T$ as the output.

\section{Theoretical Analysis}\label{sec:theory}

In this section, we provide theoretical analysis for Q-NLB-UCB. First, we analyze the quantum regression oracle that outputs $\hat{\vw}_0$ to start the algorithm, then we provide the regret analysis to prove its input dimension-free $O(\mathrm{poly}\log T)$ regret bound, supported by the confidence analysis. Full proofs and time complexity analysis are deferred to appendix.

\subsection{Quantum Non-Linear Regression Oracle}\label{sec:theory_oracle}

Here we show the existence of a quantum non-linear regression oracle which outputs the estimated parameter $\hat{\vw}_0$ that satisfies Eq. \eqref{eq:w0}. This algorithm involves two primary procedures: (1) obtaining a quantum state that encodes $\hat{\vw}_0$; (2) retrieving the classical information of all entries in $\hat{\vw}_0$. Roughly speaking, this result can be achieved by ``quantizing'' the proof of Theorem 5.2 in \cite{LW23} with quantum fast-forward technique \citep{AS19}, shown below.

\begin{lemma}[Adapted from Theorem 5.2 in \cite{LW23}]\label{lm:parameter_dist}
Suppose Assumptions \ref{asm:real}, \ref{asm:bounded}, and \ref{asm:loss} hold. There is an absolute value $C$ such that after $T_0$ iterations in step 1 of \Cref{alg:q_go_ucb} where $T_0$ satisfies $T_0 \geq C d_\vw \iota \max\left\{ \frac{\mu^{\gamma/(2-\gamma)}}{\tau^{2/(2-\gamma)}}, \frac{2C^2_g}{\mu c^2} \right\}$, with probability $1-\delta/2$, the quantum regression oracle returns an estimate $\hat{\vw}_0$ that satisfies $\|\hat{\vw}_0 - \vw^*\|_2 \leq \sqrt{\frac{C d_\vw \iota}{T_0}}$, where $\iota$ is the logarithmic term depending on $T_0, C_h, 1/\delta$.
\end{lemma}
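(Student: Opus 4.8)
The plan is to ``quantize'' the classical proof of Theorem~5.2 in \citet{LW23}: keep its excess-risk analysis of the squared-loss empirical minimizer, but replace the classical i.i.d.\ empirical process by one driven by the quantum sampling oracles $\mathcal{O}_\vx$ and accelerated with quantum fast-forward. Concretely, I would take $\hat{\vw}_0$ to be the minimizer over $\mathcal{W}$ of $\sum_j (f_{\vx_j}(\vw)-y_j)^2$ returned by $\mathrm{Oracle}$, and introduce the excess-loss variable $Z_\vw(\vx,y)=(f_\vx(\vw)-y)^2-(f_\vx(\vw^*)-y)^2$. Under Assumption~\ref{asm:real} its mean over $\vx\sim\mathcal{U}$ equals $L(\vw)-L(\vw^*)$, and under Assumption~\ref{asm:bounded} it obeys the self-bounding variance inequality $\var{Z_\vw}\le c_0\,(L(\vw)-L(\vw^*))$ for an explicit $c_0=c_0(C_f,C_g)$. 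This ``small variance of the squared loss near the optimum'' is precisely what makes a Bernstein-type bound pay off, as anticipated in \Cref{sec:algorithm}.

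The second step is the concentration. Applying the Craig--Bernstein inequality \citep{Cra1933} to $Z_{\hat{\vw}_0}$, together with the optimality $\hat{L}(\hat{\vw}_0)\le\hat{L}(\vw^*)$ and the variance bound, produces a self-bounding inequality of the form $L(\hat{\vw}_0)-L(\vw^*)\le A\,\tfrac{d_w\iota}{N}+B\sqrt{\bigl(L(\hat{\vw}_0)-L(\vw^*)\bigr)\tfrac{d_w\iota}{N}}$, where $N$ is the effective number of samples, the factor $d_w$ and the logarithmic term $\iota=\mathrm{polylog}(T_0,C_h,1/\delta)$ arise from a covering-number argument over $\mathcal{W}\subseteq[0,1]^{d_w}$ (the smoothness constants $C_g,C_h$ of Assumption~\ref{asm:bounded} entering the covering radius, hence $\iota$), and solving the quadratic gives $L(\hat{\vw}_0)-L(\vw^*)=\tilde{O}(d_w\iota/N)$. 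The quantum speedup enters here: by the fast-forward technique of \citet{AS19}, the $N$ worth of concentration is obtained with only $\tilde{O}(\sqrt{N})$ queries to the oracles $\mathcal{O}_\vx$, so the classical $1/\sqrt{T_0}$ rate of \citet{Now09_12} improves quadratically in the query count. I expect \emph{this} to be the main obstacle: one must re-run the Craig--Bernstein derivation with the fast-forwarded process substituted for the classical empirical mean and verify that its variance-sensitive tail and the fast-forward simulation error are controlled simultaneously; routing the statistical estimates it needs through \Cref{lem:qme}-type guarantees and absorbing their error into $A,B$ is the natural way to make this rigorous.

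Finally I would pass from the excess-risk bound to the parameter-distance bound via Assumption~\ref{asm:loss} with a two-stage bootstrap. A crude first pass (using only boundedness $|Z_\vw|\le 2C_f^2$) shows $L(\hat{\vw}_0)-L(\vw^*)$ is already small, and the $(\tau,\gamma)$-growth branch then confines $\hat{\vw}_0$ to a neighbourhood of $\vw^*$; the hypothesis $T_0\ge Cd_w\iota\max\{\mu^{\gamma/(2-\gamma)}/\tau^{2/(2-\gamma)},\,2C_g^2/(\mu c^2)\}$ is exactly the condition guaranteeing that this neighbourhood lies inside the region where $\mu$-local strong convexity and $c$-local self-concordance are valid (the first term in the maximum calibrates the crossover between the growth and strong-convexity regimes, the second forces $\hat{\vw}_0$ into the self-concordance ball). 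On that region $\tfrac{\mu}{2}\|\hat{\vw}_0-\vw^*\|_2^2\le L(\hat{\vw}_0)-L(\vw^*)$, and combining this with the refined excess-risk bound, after tracking all the constants, yields the claimed bound on $\|\hat{\vw}_0-\vw^*\|_2$; as in \citet{LW23}, self-concordance is what certifies that the empirical minimizer does not escape the good region, so the strong-convexity inequality is legitimately applied. The $\delta/2$ failure probability is that of the single Craig--Bernstein step together with the union bound over the covering net.
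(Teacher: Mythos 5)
The statistical core of your proposal is essentially the paper's argument: the paper also formulates bounded loss variables, verifies a Bernstein-type moment condition and the small-variance property $\var{U_i}\le b^2 L(F_\vw)$ (its Claims on the moment condition and variance), applies the Craig--Bernstein inequality together with the ERM optimality $\hat L(\hat f_{T_0})\le\hat L(\vw^*)$, runs a covering-number argument over $\calW\subseteq[0,1]^{d_w}$ to produce the $d_w\iota$ factor, and then converts the excess-risk bound $L(f_{\hat\vw_0})\le C d_w\iota/T_0$ into the parameter-distance bound by invoking the growth/local-strong-convexity dichotomy of Assumption~\ref{asm:loss} exactly as in Theorem~5.2 of \citet{LW23} (the paper states this last step goes through ``with no changes required''). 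Minor cosmetic differences aside (the paper works with the loss itself rather than the excess loss, uses $L(\vw^*)=0$ from realizability and a noiseless $Y(Z)=f_{\vw^*}(Z)$ justified by quantum mean estimation, and sums the CB inequality over iterations of a quantum sampling oracle rather than over classical i.i.d.\ samples), your reading of where $d_w$, $\iota$, and the lower bound on $T_0$ come from is consistent with the paper.

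The genuine problem is structural: you have imported the quantum fast-forward of \citet{AS19} into the proof of \Cref{lm:parameter_dist}, where it does not belong. This lemma asserts only $\|\hat\vw_0-\vw^*\|_2\le\sqrt{Cd_\vw\iota/T_0}$ after $T_0$ iterations --- the \emph{classical} $O(1/\sqrt{T_0})$ rate, as the paper explicitly notes --- and its proof uses no fast-forwarding at all. The quadratic speedup is deferred entirely to \Cref{thm:phaseI}, where the paper first fixes the $T_0$-step guarantee of \Cref{lm:parameter_dist} as a property of the Markov-chain iterate $D^{T_0}\vw$ and only then invokes fast-forward as a black-box simulation of $D^{T_0}$ in $\tilde O(\sqrt{T_0})$ quantum walk steps, paying an additive simulation error $\epsilon_1$ that is set to $1/T_0$. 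Your alternative --- ``re-running the Craig--Bernstein derivation with the fast-forwarded process substituted for the classical empirical mean'' so that $\sqrt{N}$ queries buy $N$ worth of concentration --- is precisely the step you flag as the main obstacle, and it is not carried out (nor needed) anywhere in the paper; if it did work it would prove the stronger \Cref{thm:phaseI} rate under the wrong hypothesis on $T_0$ (the lemma requires $T_0\ge Cd_\vw\iota\max\{\cdot\}$, the theorem $T_0^2\ge Cd_\vw\iota\max\{\cdot\}$). To match the paper, drop the fast-forward from this lemma entirely and keep the clean separation: concentration first, quantum simulation of the iteration afterwards.
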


Note that this lemma still gives $O(1/\sqrt{T_0})$ asymptotical rate after $T_0$ iterations, the same as in classical setting. To attain the desired faster $O(1/T_0)$ rate, we apply the quantum fast-forward technique introduced in \cite{AS19}. An informal description below follows that in \cite{AGJK20}.

\begin{lemma}[Informal statement of quantum fast-forward \citep{AGJK20}]\label{thm:QFF_informal}
    Let $\epsilon \in (0,1), s \in [0,1]$ and $t \in \N$. Given a reversible Markov chain determined by a matrix $\matD$, there is a quantum algorithm with $O(\sqrt{t \log(1/\epsilon)})$ quantum walk steps that takes input $|\overline{0}\>|\psi\> \in \mathrm{span}\{|\overline{0}\>|x\> : x \in X\}$, and outputs a state that is $\epsilon$-close to a state of the form $|0\>^{\otimes a} |\overline{0}\> D^t|\psi\> + |\Gamma\>$
    where $a=O(\log(t\log(1/\epsilon)))$, $|\overline{0}\> $ is some fixed reference state, and $|\Gamma\>$ is some garbage state that has no support on states containing $|0\>^{\otimes a} |\overline{0}\> $ in the first two registers.
\end{lemma}

Consequently, we can summarize the goal of Step 1 in \Cref{alg:q_go_ucb} with the following theorem. Full proofs are deferred to appendix.

\begin{theorem}\label{thm:phaseI}
    Suppose Assumptions \ref{asm:real}, \ref{asm:bounded}, and \ref{asm:loss} hold. There is an absolute value $C$ such that after $\tilde{O}(T_0)$ iterations in step 1 of \Cref{alg:qra} where $T_0$ satisfies $T_0^2 \geq C d_\vw \iota \max\left\{ \frac{\mu^{\gamma/(2-\gamma)}}{\tau^{2/(2-\gamma)}}, \frac{2C^2_g}{\mu c^2} \right\}$
    with probability $1-\delta/2$, the quantum regression oracle returns a quantum state that encodes an estimate $\hat{\vw}_0$ that satisfies $\|\hat{\vw}_0 - \vw^*\|_2 \leq \frac{\sqrt{C d_\vw \iota}}{T_0},$
    where $\iota$ is the logarithmic term depending on $T_0, C_h, 1/\delta$.
\end{theorem}

So far we have proved the convergence of information provided by $|\hat{\vw}_0\>$. This means that after applying the quantum regression oracle, we will be returned with a quantum state $|\hat{\vw}_0 \>$ which encodes the parameter vector $\hat{\vw}_0$ that meets the convergence guarantee. Next for the classical usage of parameter vector $\hat{\vw}_0$, we need to retrieve them from the quantum state $|\hat{\vw}_0\>$. A standard approach will be employing quantum state tomography. But this will be an overkill because in general quantum state tomography considers the cases where the $\log N$-qubit quantum state input to be in $\C^N$. Recall that with loss of generality, our work focuses on $|\hat{\vw}_0 \> \in [0,1]^{d_w}$. Hence, we can employ a more efficient tool, named \textit{quantum amplitude estimation}, which outputs an estimate of $\<\psi|P|\psi\>$ upon input a quantum state $|\psi\>$ and a projector $P$. However, the execution of straight quantum amplitude estimation algorithm will cause the input state $|\psi\>$ to collapse, which means $O(\mathrm{poly}(d_w, \epsilon, \delta))$ many copies of the input quantum states are needed for gaining the classical information of all the entries in $|\psi\>$. In our problem, this preparation of multiple copies of the state $|\hat{\vw}_0 \>$ will require queries to the quantum sampling oracles $\calO_\vx$, thus dramatically increasing the cumulative regret, which is undesirable. That means that our problem lies in scenarios where $|\psi\>$ is extremely expensive to prepare. To avoid this, we turn to \textit{non-destructive amplitude estimation} which can return an estimation of $\<\psi|P|\psi\>$ and also give the copy of $|\psi\>$ back. There are multiple existing works in this field. \cite{RF23} listed multiple of them and we pick one among them. An informal description is stated below.

\begin{theorem}[(Informal) Non-destructive amplitude estimation in \cite{RF23}]\label{thm:ndae_informal}
    Given one copy of a quantum state $|\psi\>$, a projector $P$, and $\epsilon, \delta \in [0,1/2]$. Let $a = \<\psi|P|\psi\>$. Then there is an algorithm $\mathsf{NDAE}(|\psi\>, P, \epsilon, \delta)$ that with probability at least $1-\delta$, outputs an estimate $\tilde{a}$ and a copy of $|\psi\>$ such that $|\tilde{a} - a| \leq \epsilon$.
\end{theorem}

Let \textsf{Oracle} denote a quantum algorithm that solves the non-linear regression problem and outputs a quantum state encoding the solution that satisfies the convergence bound provided by \Cref{thm:phaseI}. Combining \Cref{thm:phaseI} and \Cref{thm:ndae_informal}, we obtain the following result for the quantum non-linear regression oracle ($\mathsf{QNLRO}$, \Cref{alg:qra}).

\begin{theorem}\label{thm:qro}
    Suppose Assumptions \ref{asm:real}, \ref{asm:bounded}, and \ref{asm:loss} hold. Then \Cref{alg:qra} returns with probability $1-\delta /2$, a classical vector of an estimate $\hat{\vw}_0$ that satisfies $\|\hat{\vw}_0 - \vw^*\|_2 \leq \frac{\sqrt{C d_\vw \iota}}{T_0}$,
    where $\iota$ is the logarithmic term depending on $T_0, C_h, 1/\delta$ and $T_0$ satisfies
    $T_0^2 \geq C d_\vw \iota \max\left\{ \frac{\mu^{\gamma/(2-\gamma)}}{\tau^{2/(2-\gamma)}}, \frac{2C^2_g}{\mu c^2} \right\}$.
\end{theorem}

The above theorem implies that it is sufficient to use only one copy of $|\hat{\vw}_0\> $ to extract the classical information of all of it's entries, given the fact that all the entries belong to $[0,1]$. More concretely, \Cref{alg:qra} first calls a quantum algorithm $\mathsf{Oracle}$ to obtain a quantum state that has the solution parameter vector. Then it uses $\mathsf{NDAE}$ with projectors that project onto each computational basis, i.e. $\<\hat{\vw}_0| i\>\<i|\hat{\vw}_0\>$, to retrieve each entry, respectively. Note that each time $\mathsf{NDAE}$ returns not only an estimate of an entry but also a copy of the original $|\hat{\vw}_0\> $. Additionally, since $\mathsf{NDAE}$ doesn't query the sampling oracle $\calO_\vx$, its execution incurs no extra cumulative regret.

\subsection{Regret Analysis}\label{sec:regret}

\begin{figure*}[t]
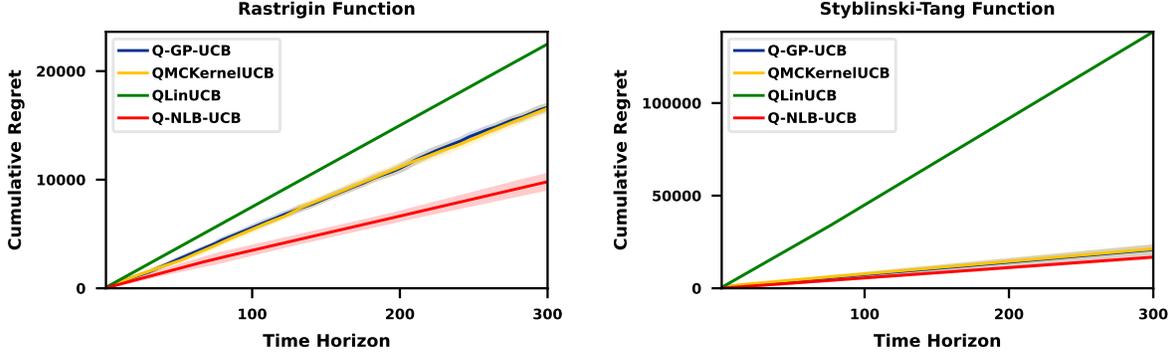

    \centering
    \begin{minipage}{0.24\linewidth}\centering
		\includegraphics[width=\textwidth]{Rastrigin_30D.pdf}\\[0.05ex]
    \small (a) 30D Rastrigin
	\end{minipage}
    \begin{minipage}{0.24\linewidth}\centering
		\includegraphics[width=\textwidth]{ST_30D.pdf}\\[0.05ex]
    \small (b) 30D Styblinski-Tang
	\end{minipage}
    \begin{minipage}{0.23\linewidth}\centering
		\includegraphics[width=\textwidth]{8D_MLP_Cancer.pdf}\\[0.05ex]
    \small (c) AutoML (Cancer)
	\end{minipage}
    \begin{minipage}{0.23\linewidth}\centering
		 \includegraphics[width=\textwidth]{8D_MLP_Diabetes.pdf}\\[0.05ex]
    \small (d) AutoML (Diabetes)
	\end{minipage}
    \caption{Cumulative regrets (the lower the better) of all compared quantum bandit algorithms.}
    \label{fig:cr}
\end{figure*}

Now we present the cumulative regret bound of our proposed Q-NLB-UCB algorithm.
\begin{theorem}[Cumulative regret bound of Q-NLB-UCB]\label{thm:cr}
Suppose Assumptions \ref{asm:real}, \ref{asm:bounded}, and \ref{asm:loss} hold. There is an absolute value $C$ such that after $\tilde{O}(T_0)$ iterations in Step 1 of \Cref{alg:q_go_ucb} where $T_0$ satisfies $T^2_0 \geq C d_\vw \iota \max\left\{ \frac{\mu^{\gamma/(2-\gamma)}}{\tau^{2/(2-\gamma)}}, \frac{2C^2_g}{\mu c^2} \right\}$ with $\iota$ denoting a logarithmic term depending on $T_0, C_h, 1/\delta$. Then Algorithm \ref{alg:q_go_ucb} with parameters $T_0=\sqrt{T}, \lambda=T$ satisfies that with probability at least $1-\delta$, $R_T = O\big(d_w^2 \log^\frac{3}{2}(T) \log (d_w \log (T))\big)$.
\end{theorem}
\begin{remark}
Note $d_w$ is the parameter complexity of $f_\vw$, which is \emph{not} necessarily related to the input dimension $d_x$ of $f_0$, therefore, our regret bound is {input dimension-free}. In practice, $d_w$ can be smaller or larger than $d_x$ since $d_w$ solely depends on the users' choice of parametric functions, which can be linear or quadratic functions, or even deep neural networks. When $f_\vw$ is chosen to be a linear function, our algorithm reduces back to the linear bandits as in QLinUCB \citep{WZL+23}. Compared with Q-GP-UCB \citep{DLV+23} and QMCKernelUCB \citep{hikimaquantum}, our algorithm takes a different technical route to successfully avoid the curse of dimensionality limitation. Moreover, our bound is at the $\log^\frac{3}{2}(T)\log\log T$ rate, which is also faster than classical lower bound $\Omega(\sqrt{T})$, showing the power of quantum computing.
\end{remark}
\begin{remark}
The choices of $T_0=\sqrt{T},\lambda=T$ require careful analyses among quantum regression oracle, regret analysis, and confidence analysis. $T_0$ cannot be chosen too large to enforce a large $T$ and it cannot be chosen too small to break the property of $\hat{\vw}_0$ in Eq. \eqref{eq:w0}. The choice of $\lambda$ is obtained by balancing between different terms in regret analysis to ensure a $\mathrm{poly} \log T$-style bound. 
\end{remark}

\noindent \textbf{Proof Sketch.} The proof starts from the instantaneous regret in a single round in one stage. Since we are dealing with non-linear bandit optimization where the objective function is not necessarily linear, we use Taylor's theorem to expand the objective function into first order terms and high-order terms. The first order terms are handled like linear bandits \citep{WZL+23}, but the remaining high-order terms are bounded creatively using the convex property of parameter uncertainty region $\mathrm{Ball}_s$ multiple times. After we obtain the upper bound for a single round in one stage, we multiply it by number of rounds in one stage and obtain the bound for one stage. Later, we prove the following lemma to show the total number of stages in Q-NLB-UCB to make sure the total number of rounds is $T$.
\begin{lemma}[Number of stages]\label{lem:rounds}
The Algorithm \ref{alg:q_go_ucb} runs at most $m = d_w \log\left( \frac{C_g^2 T^2}{d_w \lambda} + 1 \right)$ stages.
\end{lemma}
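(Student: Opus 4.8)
\textbf{Proof proposal for Lemma~\ref{lem:rounds}.}
The plan is to bound the number of stages by a determinant (log-determinant) argument, which is the standard device for controlling the "information gain" accumulated by rank-$1$ updates in linear-bandit-style analyses. Recall that in Step~3 the covariance is updated by $\Sigma_s = \Sigma_{s-1} + \epsilon_{s-1}^{-2}\, \nabla f_{\vx_{s-1}}(\hat{\vw}_0)\nabla f_{\vx_{s-1}}(\hat{\vw}_0)^\top$, starting from $\Sigma_0 = \lambda \matI$, and that the weight $\epsilon_{s-1}^{-2}$ is precisely what the QME subroutine uses to decide how many rounds (namely $\frac{C_1}{\epsilon_{s-1}}\log\frac{m}{\delta}$, which one should read here as the "mass" contributed by stage $s-1$) are spent on action $\vx_{s-1}$. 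The key observation is that $\epsilon_s = \|\nabla f_{\vx_s}(\hat{\vw}_0)\|_{\Sigma_s^{-1}}$, so that the vector added at stage $s$ has squared $\Sigma_{s-1}^{-1}$-norm equal to $\epsilon_{s-1}^2$, and hence the \emph{weighted} new direction $\frac{1}{\epsilon_{s-1}}\nabla f_{\vx_{s-1}}(\hat\vw_0)$ has unit $\Sigma_{s-1}^{-1}$-norm. This is the quantum-bandit analogue of the "elliptical potential" being incremented by a bounded amount at each step.

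First I would write, via the matrix determinant lemma,
\[
\det(\Sigma_s) = \det(\Sigma_{s-1})\left(1 + \tfrac{1}{\epsilon_{s-1}^2}\,\nabla f_{\vx_{s-1}}(\hat\vw_0)^\top \Sigma_{s-1}^{-1}\,\nabla f_{\vx_{s-1}}(\hat\vw_0)\right) = \det(\Sigma_{s-1})\big(1 + \|\tfrac{1}{\epsilon_{s-1}}\nabla f_{\vx_{s-1}}(\hat\vw_0)\|^2_{\Sigma_{s-1}^{-1}}\big).
\]
By the definition of $\epsilon_{s-1}$ the bracketed factor equals exactly $2$ if one uses the convention that each stage adds a unit-norm direction; more conservatively it is at least some constant bounded below, so telescoping over $s=1,\dots,m$ gives $\det(\Sigma_m) \geq \det(\Sigma_0)\cdot 2^{m}$ (or $c^m$ for the appropriate constant $c>1$). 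Next I would upper bound $\det(\Sigma_m)$: since $\Sigma_m = \lambda\matI + \sum_{i=0}^{m-1}\epsilon_i^{-2}\nabla f_{\vx_i}(\hat\vw_0)\nabla f_{\vx_i}(\hat\vw_0)^\top$, the trace satisfies $\Tr(\Sigma_m) \leq d_w\lambda + \sum_{i=0}^{m-1}\epsilon_i^{-2}\|\nabla f_{\vx_i}(\hat\vw_0)\|_2^2 \leq d_w\lambda + C_g^2 \sum_{i=0}^{m-1}\epsilon_i^{-2}$ by Assumption~\ref{asm:bounded}; and since the total number of rounds is $T = \sum_{i} \frac{C_1}{\epsilon_i}\log\frac{m}{\delta}$ one controls $\sum_i \epsilon_i^{-2}$ in terms of $T^2$. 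By AM--GM, $\det(\Sigma_m) \leq (\Tr(\Sigma_m)/d_w)^{d_w} \leq \big(\lambda + C_g^2 T^2/d_w\big)^{d_w}$ up to the logarithmic factor, which is absorbed by the $\tilde O$/constant bookkeeping. Combining the two bounds, $c^{m}\,\lambda^{d_w} \leq \big(\lambda + C_g^2 T^2/d_w\big)^{d_w}$, and solving for $m$ yields $m \leq d_w\log\!\left(\frac{C_g^2 T^2}{d_w\lambda} + 1\right)$ after choosing the base of the logarithm to match $c$ (or, if one keeps $c=2$, with a harmless constant factor that is folded into the statement's constants).

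The main obstacle I anticipate is pinning down the exact relationship between the per-stage weight $\epsilon_{s-1}^{-2}$ appearing in $\Sigma_s$ and the per-stage round count $\frac{C_1}{\epsilon_{s-1}}\log\frac{m}{\delta}$, so that the constraint "$\sum_s(\text{rounds in stage }s) = T$" cleanly translates into the bound on $\sum_s \epsilon_s^{-2}$ used in the trace estimate — this is where the algorithm's design (weight $= 1/\epsilon^2$, rounds $\propto 1/\epsilon$) must interlock correctly, and it is also the only place where $T$ enters. A secondary, more routine point is verifying that the lower-bound factor $1 + \|\frac{1}{\epsilon_{s-1}}\nabla f_{\vx_{s-1}}(\hat\vw_0)\|^2_{\Sigma_{s-1}^{-1}}$ is genuinely bounded below by a constant strictly greater than $1$ — this follows immediately from the definition $\epsilon_{s-1} = \|\nabla f_{\vx_{s-1}}(\hat\vw_0)\|_{\Sigma_{s-1}^{-1}}$, which forces that weighted norm to be exactly $1$, but it is worth stating explicitly since it is the crux of why the log-determinant grows linearly in $m$.
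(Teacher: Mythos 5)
Your proposal is correct and follows essentially the same route as the paper's proof: the matrix determinant lemma gives $\det(\Sigma_s)=2\det(\Sigma_{s-1})$ exactly (since $\epsilon_{s-1}=\|\nabla f_{\vx_{s-1}}(\hat\vw_0)\|_{\Sigma_{s-1}^{-1}}$ makes the added direction unit-norm), the trace--determinant inequality with $\|\nabla f_{\vx_s}(\hat\vw_0)\|_2\le C_g$ bounds $\det(\Sigma_m)$ above, and the link $\sum_s \epsilon_s^{-2}\le\bigl(\sum_s \epsilon_s^{-1}\bigr)^2\le T^2$ (using $C_1\log\frac{m}{\delta}\ge 1$) closes the argument. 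The paper phrases the final step as a contradiction rather than solving directly for $m$, but that is a cosmetic difference.
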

Note again in this lemma, $d_w$ is the dimension of parameters, rather than $d_x$. This is because we are using $\nabla f_{\vx_i}(\hat{\vw}_0)$ as the feature vector and its dimension is $d_w$. After proving Lemma \ref{lem:rounds}, we take the summation of upper bounds and reach the upper bound of cumulative regret.

\subsection{Confidence Analysis}

The previous regret builds upon the successful construction of the confidence ball $\mathrm{Ball}_s$ for each stage $s$, which is summarized in the following confidence analysis lemma.
\begin{lemma}[Confidence bound of Q-NLB-UCB]\label{lem:beta}
Suppose Assumptions \ref{asm:real}, \ref{asm:bounded}, and \ref{asm:loss} hold and $\beta_s$ is chosen as Eq. \eqref{eq:beta}. Then with parameters $T_0=\sqrt{T},\lambda=T$ in each stage $s$ in Algorithm \ref{alg:q_go_ucb}, the optimal parameter $\vw^*$ is trapped in confidence ball $\mathrm{Ball}_s$ with probability at least $1-\delta$, i.e., $\|\hat{\vw}_s - \vw^*\|^2_{\Sigma_s} \leq \beta_s$.
\end{lemma}
\begin{remark}
The design of $\mathrm{Ball}_s$ is similar to that in LinUCB \citep{abbasi2011improved} and QLinUCB \citep{WZL+23}, but our choice of $\beta_s$ is different. $\beta_s$ cannot be too small to lose the track of optimal parameter $\vw^*$ and it cannot be too large as it appears in final regret bound. Overall the confidence analysis ensures that $\beta_s=\tilde{O}(1)$ and slightly grows as the stage index $s$.
\end{remark}

\section{Experiments}\label{sec:exp}

\textbf{Experimental Setup.} We compare the performance of our Q-NLB-UCB algorithm with QLinUCB \citep{WZL+23}, Q-GP-UCB \citep{DLV+23} and QMCKernelUCB \citep{hikimaquantum}. According to \cite{hikimaquantum}, the only difference between Q-GP-UCB and QMCKernelUCB is a trade-off parameter $\eta$ balancing the precision of quantum amplitude estimation and the noise in observations, and when $\eta = 1$, they become the same. In our experiments, we set $\eta = 0.1$ for QMCKernelUCB. 
To run Q-NLB-UCB, we set our parametric function model $f_\vw$ to be a two linear layer neural network with the sigmoid activation function.
Our implementation is based upon the sklearn, BoTorch \citep{balandat2020botorch}, and Qiskit \citep{qiskit2024}. 
To stay consistent with theoretical analysis, cumulative regret is also used to evaluate algorithms. In order to reduce the impact of randomness in algorithms, we repeat each experiment 5 times and report the mean and adjusted standard errors of cumulative regrets, i.e., $\mathrm{mean} \pm \mathrm{std}/\sqrt{5}$.

\noindent\textbf{High-Dimensional Synthetic Functions.}
We test all four algorithms on two functions, Rastrigin function and Styblinski-Tang function, defined in $30$-dimensional space $[-5,5]^{30}$. Q-NLB-UCB performs well as evidenced in Figure~\ref{fig:cr}(a)(b), where it consistently achieves the lowest cumulative regret across multiple runs, outperforming all other algorithms. It is not a surprise as Q-GP-UCB and QMCKernelUCB suffer from the curse of dimensionality and Q-GP-UCB was only tested in $1$-d and $2$-d settings in \cite{DLV+23}. QLinUCB performs the worst since it is designed to work on linear functions only while both two test functions here are highly non-linear.

\begin{table}[!htbp]
\centering
\begin{tabular}{lcc}
\toprule
Algorithms & Rastrigin & Styblinski-Tang \\   \midrule
  Q-GP-UCB    & 4629.6453  & 4139.2478  \\ \noalign{\smallskip}
  QMCKernelUCB &  3744.1179  & 2565.0690  \\\noalign{\smallskip}
  Q-NLB-UCB (ours)     & 861.2402   & 919.7602  \\
\bottomrule
\end{tabular}
\caption{Runtime (in seconds) on two synthetic functions}
\label{tab:2}
\end{table}

In addition, we report the runtime of three quantum bandit algorithms in seconds shown in Table~\ref{tab:2}. Among the three algorithms, our Q-NLB-UCB algorithm achieves a significantly low runtime, which again shows the efficiency of Q-NLB-UCB and validates our theoretical time complexity analysis in appendix. We don't list the runtime of QLinUCB because it is a linear bandit algorithm running faster but not comparable to algorithms designed for non-linear optimization. 

\noindent\textbf{Real-World AutoML Tasks.}
We test all four quantum bandit algorithms on three different hyperparameter tuning tasks for Support Vector Machine (SVM), Multi-Layer Perceptron (MLP), and Gradient Boosting (GB). We are tuning $4$ hyperparameters in SVM, $8$ in MLP, and $11$ for GB. Each classifier is trained by different hyperparameter configurations and the goal is to maximize the validation accuracy on a hold-out set.
Due to page limit, we only show results of MLP in Figure~\ref{fig:cr}(c)(d) on both breast cancer and diabetes datasets, and readers are referred to appendix for similar results of SVM and GB. Again our Q-NLB-UCB algorithm outperforms all other algorithms by achieving significantly smaller regrets, demonstrating its strong potential for practical applications.

\section{Conclusion}\label{sec:con}

With the aid of quantum computing, recent works \citep{DLV+23,hikimaquantum} showed that new $O(\mathrm{poly} \log T)$ regret bound can be achieved in quantum non-linear bandit optimization, but their works heavily rely on the RKHS assumption which suffers from the curse of dimensionality. Real-world data usually sit in high-dimensional spaces, making their regret bounds vacuous.
In this paper, we develop the new Q-NLB-UCB algorithm which efficiently solves the problem in high-dimensional cases. The key design of Q-NLB-UCB involves quantum Monte Carlo mean estimation, parametric function approximation, and quantum fast-forward techniques, which all contribute to the new \emph{input dimension-free} regret bound of Q-NLB-UCB. Moreover, the choice of parametric functions can be generic, such as linear or quadratic functions, or even deep neural networks. Technically, our analysis of the new quantum non-linear regression oracle can be of independent interests in more quantum machine learning problems in the future.

\section*{Acknowledgments}

The authors would like to thank the anonymous reviewers for helpful comments that improved the final version of this paper.

\bibliography{references_camera}  

\newpage
\onecolumn
\appendix

\section{Auxiliary Lemmas}\label{sec:aux}
In this section, we show auxiliary lemmas and definitions that will be used later in proofs.

\begin{lemma}[Upper bound of summation of weights (Adapted from Lemma 2 in \cite{DLV+23})]\label{lem:sum_eps_squ}
Set $\epsilon_s = \|\nabla f_{\vx_s}(\hat{\vw}_0)\|_{\Sigma^{-1}_{s}}$, then it satisfies that
\begin{align*}
    \sum_{s=1}^{m} \frac{1}{\epsilon_s} \leq T \quad \mathrm{and} \quad \sum_{s=1}^{m} \frac{1}{\epsilon^2_s} \leq T^2.
\end{align*}
\end{lemma}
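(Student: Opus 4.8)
The plan is to read both inequalities off the round-allocation scheme of Algorithm~\ref{alg:q_go_ucb}, mirroring the batch-size accounting of Lemma~2 in \citet{DLV+23}, and to obtain the second inequality from the first by an elementary bound on nonnegative summands. First I record that each $\epsilon_s=\|\nabla f_{\vx_s}(\hat{\vw}_0)\|_{\Sigma_s^{-1}}$ is a well-defined strictly positive real: by construction $\Sigma_s\succeq\lambda\matI\succ\vzero$, so $\Sigma_s^{-1}$ is positive definite, and the gradient $\nabla f_{\vx_s}(\hat{\vw}_0)$ at the selected action is nonzero (the degenerate case $\nabla f_{\vx_s}(\hat{\vw}_0)=\vzero$ would make $\epsilon_s=0$ and the inner loop of Step~8 ill-defined, and is excluded by convention). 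Hence $1/\epsilon_s$ and $1/\epsilon_s^2$ are finite and positive.

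For the first inequality, the argument is pure counting. In stage $s$ the inner \textbf{for} loop runs $n_s:=\tfrac{C_1}{\epsilon_s}\log\tfrac{m}{\delta}$ rounds, and by the choice of the number of stages $m=d_w\log\!\big(\tfrac{C_g^2 T^2}{d_w\lambda}+1\big)$ from Lemma~\ref{lem:rounds}, the $m$ stages together use at most the horizon of $T$ rounds, i.e. $\sum_{s=1}^m n_s\le T$. Since $C_1>1$ and $\log\tfrac{m}{\delta}\ge 1$ (for the relevant parameter range; otherwise the slack is absorbed into $C_1$), the prefactor $C_1\log\tfrac{m}{\delta}$ is at least $1$, so
\[
\sum_{s=1}^{m}\frac{1}{\epsilon_s}\;\le\;\sum_{s=1}^{m}\frac{C_1}{\epsilon_s}\log\frac{m}{\delta}\;=\;\sum_{s=1}^{m}n_s\;\le\;T .
\]

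For the second inequality I would use $1/\epsilon_s\le n_s\le T$ termwise, so $1/\epsilon_s^2\le n_s T$, and summing gives $\sum_{s=1}^m 1/\epsilon_s^2\le T\sum_{s=1}^m n_s\le T^2$; equivalently one may simply note $\sum_s(1/\epsilon_s)^2\le\big(\sum_s 1/\epsilon_s\big)^2\le T^2$ since all summands are nonnegative. The only step needing genuine care — and hence the main obstacle — is justifying $\sum_{s=1}^m n_s\le T$, i.e. that running the $m$ stages with the weights $1/\epsilon_s$ does not overshoot the time budget; this is exactly what the choice of $m$ in Lemma~\ref{lem:rounds} and the algorithm's stage design are meant to guarantee. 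One must also dispose of the degenerate vanishing-gradient case noted above. Everything else is arithmetic, so I expect this proof to be only a few lines.
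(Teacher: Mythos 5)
Your proposal is correct and follows essentially the same route as the paper: both arguments rest on the round budget $\sum_{s=1}^m \frac{C_1}{\epsilon_s}\log\frac{m}{\delta}\le T$, the observation that $C_1\log\frac{m}{\delta}\ge 1$ so that $\sum_s 1/\epsilon_s$ is dominated by the total round count, and the elementary relation $\sum_s(1/\epsilon_s)^2\le\bigl(\sum_s 1/\epsilon_s\bigr)^2$ for nonnegative summands (the paper states it as $\sum_s 1/\epsilon_s\ge\sqrt{\sum_s 1/\epsilon_s^2}$ and phrases the conclusion as a contradiction rather than a direct chain). The only cosmetic difference is that you argue directly while the paper argues by contradiction; the substance is identical.
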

\begin{proof}
First we can lower bound the total number of rounds in $m$ stages. Following Eq. \eqref{eq:contradiction}, we have
\begin{align*}
    \sum_{s=1}^m \frac{C_1}{\epsilon_s} \log \frac{m}{\delta} \geq \sum_{s=1}^m \frac{1}{\epsilon_s} \geq \sqrt{\sum_{s=1}^m \frac{1}{\epsilon^2_s}}.
\end{align*}
Suppose that $\sum_{s=1}^{m} \frac{1}{\epsilon^2_s} \geq T^2$, then we have 
\begin{align*}
    \sum_{s=1}^m \frac{C_1}{\epsilon_s} \log \frac{m}{\delta} > T,
\end{align*}
which is a contradiction. Similarly we can prove both statements.
\end{proof}

\begin{lemma}[\citet{Wain19}]\label{lm:bernstein_to_sub-exp}
Let $X$ be a random variable with mean $\mu$ and variance $\sigma^2$. If $X$ satisfies the following Bernstein moment condition,
\[
\E[|X-\mu|^k] \leq \frac{\sigma^2}{2}k!\ \beta^{k-2}
\]
for some $\beta>0$ and all $k \geq 2$, then
\[
\E[e^{\lambda(X-\mu)}] \leq \exp\left( \frac{\lambda^2 \sigma^2}{2(1-\beta \lambda)} \right),\quad \forall \lambda \in \left[0, \frac{1}{\beta}\right).
\]
\end{lemma}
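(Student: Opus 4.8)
\textbf{Proof plan for Lemma \ref{lm:bernstein_to_sub-exp}.} The plan is to bound the moment generating function $\E[e^{\lambda(X-\mu)}]$ by Taylor-expanding the exponential, inserting the Bernstein moment bound term by term, and recognizing the resulting series as a geometric series.

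First I would write, using the series expansion of $e^z$ and the fact that the $k=0$ and $k=1$ terms contribute $1$ and $0$ respectively (since $\E[X-\mu]=0$),
\[
\E[e^{\lambda(X-\mu)}] = 1 + \sum_{k=2}^\infty \frac{\lambda^k}{k!}\E[(X-\mu)^k] \leq 1 + \sum_{k=2}^\infty \frac{\lambda^k}{k!}\E[|X-\mu|^k].
\]
Next I would substitute the hypothesized Bernstein moment condition $\E[|X-\mu|^k] \leq \frac{\sigma^2}{2}k!\,\beta^{k-2}$, which cancels the $k!$ in the denominator and leaves
\[
\E[e^{\lambda(X-\mu)}] \leq 1 + \frac{\sigma^2}{2}\sum_{k=2}^\infty \lambda^k \beta^{k-2} = 1 + \frac{\lambda^2\sigma^2}{2}\sum_{j=0}^\infty (\lambda\beta)^j = 1 + \frac{\lambda^2\sigma^2}{2}\cdot\frac{1}{1-\lambda\beta},
\]
where the geometric series converges precisely because $\lambda\beta \in [0,1)$, i.e. $\lambda \in [0,1/\beta)$. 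Finally I would apply the elementary inequality $1+u \leq e^u$ with $u = \frac{\lambda^2\sigma^2}{2(1-\lambda\beta)} \geq 0$ to conclude
\[
\E[e^{\lambda(X-\mu)}] \leq \exp\!\left(\frac{\lambda^2\sigma^2}{2(1-\beta\lambda)}\right),
\]
which is the claimed bound.

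There is essentially no hard part here: the only things to be careful about are justifying the interchange of expectation and the infinite sum (which follows from monotone/dominated convergence once one notes all terms are handled via the absolute moments, and the dominating series converges for $\lambda\beta<1$) and tracking that the condition $\lambda \in [0,1/\beta)$ is exactly what makes the geometric series sum. This is a standard passage from a Bernstein-type moment condition to a sub-exponential MGF bound, so I would present it compactly.
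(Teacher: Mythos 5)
Your argument is correct and is exactly the standard derivation (power-series expansion of the MGF, term-by-term insertion of the Bernstein moment bound, summation of the resulting geometric series for $\lambda\beta<1$, and $1+u\le e^u$); the paper itself imports this lemma from \citet{Wain19} without proof, and your proposal reproduces the textbook proof that citation points to. Your remark on justifying the interchange of expectation and sum via the convergent dominating series $\sum_k \lambda^k\E[|X-\mu|^k]/k!$ is the right technical caveat, so nothing is missing.
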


\section{Discussion on Quantum Oracle}\label{app:quantum_oracle}

\noindent \textbf{Quantum Access to Random Variables.} Unlike classical access to samples of random variables, in quantum realm, the corresponding distribution is being accessed by making query to quantum sampling oracle. 
\begin{definition}[Quantum sampling oracle]\label{def:sampling_oracle}
For a random variable $Y$ with (finite) sample space $\Omega$, its quantum sampling oracle $\mathcal{O}_Y$ is defined as
\begin{equation}\label{eq:QSO}
    \calO_Y : |0 \> \longmapsto \sum_{\vy \in \Omega } \sqrt{\Pr[Y= \vy]} |\vy\> \otimes |\psi_\vy\>,
\end{equation}
where $|\psi_\vy\>$ is an arbitrary quantum state for every $\vy$.
\end{definition}
The content in second quantum register can also be viewed as possible quantum garbage appeared during the implementation of the oracle. Observe that if we directly measure the output of $\mathcal{O}_Y$, it will collapse to a classical sampling access to $Y$ that returns a random sample $\vy$ with respect to probability $\Pr[Y= \vy]$. In particular, the quantum noisy function \Cref{eq:noisy_func_value} is an instance of the quantum sampling oracle. 

\paragraph{Feasibility and Practicality.}

This same type of quantum oracles were also used in previous work \cite{WZL+23,DLV+23,hikimaquantum,SZ23}. As discussed in \cite{DLV+23}, a quantum oracle is available when the learning environment is implemented by a quantum algorithm, which makes the setting of their quantum Bayesian optimization fairly general. Therefore, for example, such quantum oracles will arise naturally when quantum bandit algorithms are used to optimize hyperparameters of quantum neural networks and any other quantum machine learning models. Furthermore, there are standard techniques \cite{NC10} in theory for implementing the quantum analogs of classical algorithms. Specifically, if there is a classical circuit for the given classical oracle, its quantum version of the same asymptotic computational complexity can be built using standard technique. Thus, in some scenarios, quantum oracles can be considered as a quantum generalization of their classical counterparts, and then this framework can be applied to optimize the parameters of models implemented on a quantum computer or in quantum systems where the data itself is inherently quantum, as pointed out in \cite{DLV+23}. In particular, \cite{DLV+23} successfully implemented their algorithm in the IBM real quantum computer, which shows that the quantum oracle can be fully realized in the real world.

In practice, however, the current development of quantum hardware is still at a relatively early stage. The actual implementation feasibility depends on factors such as circuit size and the complexity of the required quantum gates. Therefore, we believe that our quantum oracle can be practically feasible in small-size problem settings where the oracle is given as an explicit circuit. 

\section{Full Proofs}\label{app:proof}

In this section, we show full proofs of all technical results in the main paper.

\subsection{Quantum Non-Linear Regression Oracle}\label{apdx_phaseI}

In this section, we first review the details and properties for step 1 of \Cref{alg:qra}, and the rest of the algorithm. At the end we provide the necessary theoretical analysis for the complete \Cref{alg:qra}.

\subsubsection{Proof Overview on $\mathsf{Oracle}$}\label{sec:oracle}
In this section, we describe the quantum regression oracle, denoted as $\mathsf{Oracle}$ in step 1 in \Cref{alg:qra}, that we need for solving a non-linear \emph{non-linear least square} problem. 

Let $D^{T_0} \vw$ denote a classical algorithm for solving a \emph{non-linear least square} problem where $\vw$ is an initial value of the solution and $D$ is a matrix representing one iteration of the algorithm. It is easy to see that Markov chain-based algorithms (e.g SGD) can be represented using this notation.

Before presenting the actual proof of \Cref{thm:phaseI}, we provide a high-level overview of the key technical aspects. The core idea in proving Lemma 5.1 is formulating the difference between expected risk and empirical risk as bounded variables satisfying a certain condition and then applying the Craig-Bernstein (CB) inequality \cite{Cra1933}. In the formulation, the variable is linked to the unknown parametric function within the optimization problem (i.e. $f_\vw(\vx)$), and the inequality is then utilized in a summation across a set of data samples. This approach is standard for classical algorithms. However, to fully harness the quantum advantage, we can employ a quantum sampling oracle on the dataset, as defined in \Cref{eq:QSO}. With such oracle, the random variables being used in the proof will be associated with the expected risk function $L(\vw)$, and then the summation in the application of CB inequality will be computed over the the number of iterations. Note that this approach still only yields a rate of $\tilde{O}(1/T_0)$. To obtain an improved rate of $\tilde{O}(1/T_0^2)$, we leverage the quadratic speed-up provided by quantum computing when implementing algorithms that satisfy the framework specified as in $D^{T_0} \vw$. While such quantum enhancements can be realized using quantum singular value transformation \cite{GSLW19}, we instead adopt the quantum fast-forward technique introduced in \cite{AS19}, following the description in \cite{AGJK20}.
\begin{theorem}[Formal rephrase of \Cref{thm:QFF_informal} in \cite{AGJK20}]\label{thm:QFF}
    Let $\epsilon \in (0,1), s \in [0,1]$ and $t \in \N$. Let $P$ be any transition matrix which defines a reversible Markov chain on state space $X$ and $D$ be its discriminant matrix. Also let $\mathsf{Q}$ be the cost of implementing a quantum walk step. There is a quantum algorithm with complexity $O(\mathsf{Q} \sqrt{t \log(1/\epsilon)}$ that takes input $|\overline{0}\>|\psi\> \in \mathrm{span}\{|\overline{0}\>|x\> : x \in X\}$, and outputs a state that is $\epsilon$-close to a state of the form
    \begin{equation}\label{eq:qff}
        |0\>^{\otimes a} |\overline{0}\> D^t|\psi\> + |\Gamma\>
    \end{equation}
    where $a=O(\log(t\log(1/\epsilon)))$, $|\overline{0}\> $ is some fixed reference state, and $|\Gamma\>$ is some garbage state that has no support on states containing $|0\>^{\otimes a} |\overline{0}\> $ in the first two registers.
\end{theorem}
Informally, this theorem shows that a $t$-step classical random walk/Markov chain can be closely approximated by quantum walk using only $\sqrt{t}$ steps. This implies that the same performance guarantees achieved by a $t$-step classical random walk can also be attained by a quantum algorithm with time complexity $\propto \sqrt{t}$. 

A more visual intuition on how the step 1 of \Cref{alg:q_go_ucb} works is pictured in \Cref{fig:step1}. It describes a high‑level picture of a quantum algorithm implementing $D^{T_0} \vw$ which denotes an execution of a classical algorithm solving a non‑linear least square problem. More concretely, the input qubits to the quantum algorithm can be divided into three different parts: parameter register, input data register, and auxiliary qubits. Then the complete quantum process before the measurement step gives $(UL)^{T_0}|w\rangle|0\rangle|0\rangle$ (here 0’s in the last two registers can represent multiple qubits), and the resulting state will have information of $D^{T_0} \vw$. The composite operator $UL$ makes one step of the quantum walk, like one update in SGD. During each step, operator $L$ transforms the input from the last step into a quantum state whose expected value yields the gradient, and $U$ performs the update process.

In quantum algorithms leveraging Markov chains for optimization problems, their inputs can be decomposed into three parts. The parameter register consists of qubits that store the parameter information. The input data register is dedicated to generating the distribution for data samples. The auxiliary qubits provide the workspace for temporary information produced during the quantum algorithms that are not part of the final output. Given an input state across those three registers, the quantum oracle $\calL$--whose detailed description will be discussed in the subsequent paragraph--transforms the input into a quantum state whose expectation value yields the gradient $\nabla \calL$. While techniques from \cite{SZ23} enable unbiased mean estimation for gradient extraction, we omit further details here as they lie outside our scope. The unitary $\matU$ then processes the output of $\calL$, implementing a single iteration of a Markov-chain-based update rule, say determined by $\matD$, for solving non-linear least square problems. Roughly speaking, $\matU$ combines two components: (1) an unbiased non-destructive multivariate mean estimation subroutine; and (2) a quantum walk step over the parameter space. The composite operation $\matU \calL$ constitutes one full iteration of the algorithm, as formalized in \Cref{thm:QFF}. Starting from the initial state, after $T_0$ iterations, i.e. $(\matU \calL)^{T_0} |\vw\>|\vzero\>|\vzero\>$, the final quantum state will encodes $\matD^{T_0} \vw$. To gain a quadratic speed-up, we apply the quantum fast-forward technique to $(\matU \calL)^{T_0}$. Consequently, the step 1 of \Cref{alg:q_go_ucb} approximates \Cref{eq:qff} with $\widetilde{O}(\sqrt{T_0})$ iterations.

\begin{figure}[t]
    \centering
\begin{quantikz}
\lstick{parameter register $\ket{\vw}$} & \gate[wires=3][1.2cm]{\calL} & \gate[wires=3][1.2cm]{\matU} & \ldots \qw & \gate[wires=3][1.2cm]{\calL} & \gate[wires=3][1.2cm]{\matU} & \meter{} \\
\lstick{input data register $\ket{\vzero}$} &  & & \ldots \qw  & & & \qw &\\
\lstick{auxiliary qubits $\ket{\vzero}$}& & & \ldots \qw & & & \qw &
\end{quantikz}

    \caption{High-level description of a quantum algorithm computing $D^{T_0} \vw$}
    \label{fig:step1}
\end{figure}

Since the quantum oracle $\calL$ plays a pivotal role in the process, we now give a detailed discussion of its implementation requirements and constraints. To formalize its purpose, without loss of generality, consider the gradient of loss function $\E_{\vx \sim D} [(f_\vw(\vx) - f_{\vw^*}(\vx))^2]$, where the parametric function $f$ can be viewed as a function with input $(\vw, \vx)$. By standard differentiation, its gradient is $\E_{\vx \sim D} [2(f_\vw(\vx) - f_{\vw^*}(\vx)) \nabla f_\vw(\vx)] $. The oracle $\calL$ must therefore prepare a quantum state whose expectation value encodes this gradient. Note that $\calL$ viewed as an operator maps $|\vw\>$ to a quantum state $\sum_{Y} \sqrt{\Pr[Y]}|\calA(Y)\>$ for some operations $\calA$, and its expectation gives $\E_Y[\calA(Y)]$. Hence, we need to design the necessary $\calA$ so that the multivariate mean value $\E_Y[\calA(Y)]$ produces $\E_{\vx \sim D} [2(f_\vw(\vx) - f_{\vw^*}(\vx)) \nabla f_\vw(\vx)]$. One caveat is that $f_{\vw^*}(\vx)$ is not accessible directly. Instead, we are given access to the quantum noisy function oracle \Cref{eq:noisy_func_value}. However, this limitation can be addressed by leveraging the fact that $\E[y] = f_{\vw^*}(\vx)$. Overall, we can construct the quantum oracle $\calL$ as follows.

Given the quantum noisy function oracle $\calO_\vx$ (as defined in \Cref{eq:noisy_func_value}) for all $\vx$ in $\calX$, we can obtain the following quantum oracle $\calL$:
\begin{align*}
\calL : |\vw\> |\vzero\> |\vzero\> & \longmapsto  \sum_{\vx} \sum_{\ve} \sqrt{\Pr[\vx]} \cdot \sqrt{\Pr[\ve]} |(f_\vw(\vx) - (f_{\vw^*}(\vx) + \ve)) \cdot \frac{\partial f_\vw(\vx)}{\partial w_1}\> \\
&\qquad \qquad \otimes \cdots |(f_\vw(\vx) - (f_{\vw^*}(\vx) + \ve)) \cdot \frac{\partial f_\vw(\vx)}{\partial w_d}\> \otimes |\mathrm{garbage}\>.
\end{align*}
The proof for this conversion is straightforward. By standard quantum computation principles, any classical circuit $g$ can be embedded into a quantum circuit $G$ such that: $|\vx\>|\vzero\> \stackrel{G}{\longmapsto} |\vx\>|g(\vx)\>$. Building on this, without loss of generality, we construct a quantum circuit that prepares a quantum state whose expectation value encodes the gradient of expected loss function $L= \E_\vx[(f_\vx(\vw) - f_\vx(\vw^*))^2]$. Three components enable this: (1) the parametric function $f$ is classically specified; (2) the data input distribution can be created from scratch efficiently; and (3) the value $f_\vx(\vw^*)$ can be obtained as the mean of the operator in \Cref{eq:noisy_func_value}. Following the standard embedding process mentioned above, these three components are combined into a unified quantum circuit acting on registers.

\subsubsection{Overview on \Cref{alg:qra}}\label{sec:rest}
Let $|\hat{\vw}_0\>$ be the quantum state returned by $\mathsf{Oracle}$ in step 1 and write $|\hat{\vw}_0\> = \sum_i^{d_\vw} a_i |i\>$. The rest of \Cref{alg:qra} is about retrieving $a_i$'s for $i = 1, \cdots, d_\vw$. One standard approach to achieve that is using quantum state tomography. However, in our case, we can use amplitude estimation instead because each $a_i \in [0,1]$. At the same we want to avoid repeatedly calling $\mathsf{Oracle}$ for obtaining more copies of $|\hat{\vw}_0\>$ since that will dramatically increase the number of queries to the quantum sampling oracle $\calO_\vx$, which will in turn cause extra cumulative regret.  Hence, we will a technique called \emph{non-destructive amplitude estimation} which can return both an estimate of $a_i$ and a copy of the input state $|\hat{\vw}_0\>$. There are multiple existing works in this field. \cite{RF23} listed multiple of them and we pick one among them. Below is a more formal statement of \Cref{thm:ndae_informal}.

\begin{theorem}[Non-destructive amplitude estimation in \cite{RF23}]\label{thm:ndae}
    Given one copy of a quantum state $|\psi\>$, a projector $P$, and $\epsilon, \delta \in [0,1/2]$. Let $a = \<\psi|P|\psi\>$. Then there is a quantum algorithm $\mathsf{NDAE}(|\psi\>, P, \epsilon, \delta)$ that outputs with probability at least $1-\delta$, an estimate $\tilde{a}$ and a copy of $|\psi\>$ such that $|\tilde{a} - a| \leq \epsilon$ and it uses the reflections and rotations on $|\psi\>$ and $P$ $\tilde{\calO}(\frac{1}{\epsilon \sqrt{\delta}})$ times.
\end{theorem}
\begin{remark}
    Given a projector $P$, its reflection can be defined as $2P - I$ or $I - 2P$, and its rotation is defined as $e^{2\theta (2P - I)}$ or $e^{2\theta (I-2P)}$ for arbitrary phases $\theta$. Note that the projector for a quantum state $|\psi\>$ can be represented as $|\psi\> \<\psi|$. Hence, the reflections and rotations for state $|\psi\>$ can be defined similarly. Since neither reflections or rotations require queries to the quantum sampling oracles $\calO_\vs$, the usage of them will not contribute to cumulative regret. Hence, their implementations are outside the scope of our work. Interested readers can refer to \cite{RF23} for details on how to implement them using the given state and projector.
\end{remark}

With \textsf{NDAE}, we can retrieve an estimate for each $a_i$ for $i = 1, \cdots, d_\vw$ with only one copy of $|\hat{\vw}_0\>$, thus avoiding the expenses caused by repeatedly solving a non-linear least square problem. Overall, the correctness of \Cref{alg:qra} is a result of \Cref{thm:phaseI} and \Cref{thm:ndae}, which can be summerized as follows.

\begin{theorem}[Restatement of \Cref{thm:qro}]\label{thm:qro_restate}
    Suppose Assumptions \ref{asm:real}, \ref{asm:bounded}, and \ref{asm:loss} hold. Then \Cref{alg:qra} returns with probability $1-\delta /2$, a classical vector of an estimate $\hat{\vw}_0$ that satisfies
    \[
    \|\hat{\vw}_0 - \vw^*\|_2 \leq \frac{\sqrt{C d_\vw \iota}}{T_0},
    \]
    where $\iota$ is the logarithmic term depending on $T_0, C_h, 1/\delta$ and $T_0$ satisfies
    $T_0^2 \geq C d_\vw \iota \max\left\{ \frac{\mu^{\gamma/(2-\gamma)}}{\tau^{2/(2-\gamma)}}, \frac{2C^2_g}{\mu c^2} \right\}$.
\end{theorem}
Its proof is presented in the next section.

\subsubsection{Theoretical Analysis}
Recall that a quantum sampling oracle enables the access to a superposition that entirely encodes a distribution of finite size. The following notations will be found convenient in constructing our proofs. Let $Y(Z) = f_{\vw^*}(Z)$ and define $F_\vw(X) := \sum_{Z \in D(X)} (f_\vw(Z) - Y(Z))^2 = \sum_{i=1}^M (f_\vw(Z_i) - Y(Z))^2 $ where $D(X)$ is some probability distribution with finite sample space $\Omega \subseteq \R^d$ centered at $X$ of size $|\Omega| = M$, $L(f_\vw) = \E_X F_\vw(X)$, and $\hat{L}(f_\vw)= 1/T_0\sum_{i=1}^{T_0} F_\vw(X_i)$. It is easy to check that $\E_X[\hat{L}(f_\vw)] = L(f_\vw)$. We also assume that the output value of $f$ is contained in a interval of length $b \geq 1$. Note that if $M = 1$, then the quantum sampling oracle becomes a classical oracle. Hence, without loss of generality, we can have $M \gg 1$.  

\begin{remark}
    In classical setting, $Y(Z) = f_{\vw^*}(Z) + \eta$, and a $Y(Z)$ is returned when querying with input $Z$, and one can estimate $f_{\vw^*}(Z)$ by Chernoff bound. However, in quantum realm, one can extract $f_{\vw^*}(Z)$ while still maintaining it as a quantum superposition instead of a classical sample by mean estimation algorithms \cite{Mon15,SZ23} without performing the measurement. Therefore, it is not a strong relaxation to have $Y(Z) = f_{\vw^*}(Z)$ in the analysis.
\end{remark}

\begin{lemma}\label{lm:squared_error_loss}
Given an oracle that accesses a dataset $\{Z,f_{\vw^*}(Z)\}_{Z\in D(X)}$ where $D$ is some distribution with finite sample space $\Omega \subseteq \R^d$ centered at $X$ when queried with input $X$. Let $\calF$ be a finite function class satisfying $\calF \subset \{ f:[0,1]^d \rightarrow [-b, b] \}$ for some $b \geq 1$ and define empirical risk minimizer (ERM) $\hat{f}_{T_0} := \argmin_{f \in \calF} \left\{ \hat{L}(f) \right\}.$ Then with probability at least $1-\delta$, the following holds
\[
L(\hat{f}_{T_0})\leq \frac{1+\alpha}{1-\alpha}L(\tilde{f}) +  2 \frac{\log |\calF| + \log 2/\delta}{(1-\alpha)T_0 \epsilon}
\]
where $\epsilon < \frac{c_1^{k/(k-2)}}{4Lb^2}$, $\alpha = \epsilon b^2$, and $\tilde{f} \in \calF$ is arbitrary.
\end{lemma}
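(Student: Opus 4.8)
The plan is to run a standard uniform-convergence argument over the finite class $\calF$ but with the key twist that the per-iteration random variable carries the \emph{variance reduction} that holds for squared loss near the regression target. First I would fix an arbitrary $f \in \calF$ and, for each iteration $i \in [T_0]$, define the centered random variable $X_i := F_{\tilde f}(X_i) - F_f(X_i)$, where $X_i$ is drawn from the sampling distribution underlying the quantum oracle. By construction $\E[X_i] = L(\tilde f) - L(f)$, and because each summand is a difference of squared-loss terms with outputs in $[-b,b]$, one has the pointwise bound $|F_{\tilde f}(X) - F_f(X)| \le 2b \cdot |f(Z)-\tilde f(Z)|$-type control; more importantly, the \emph{variance} of $X_i$ is bounded by a constant multiple of $b^2 \big(L(\tilde f) + L(f)\big)$ — this is the usual ``small variance of the excess loss'' phenomenon. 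I would package this into the Bernstein moment condition $\E[|X_i - \E X_i|^k] \le \tfrac{\sigma_i^2}{2}k!\,\beta^{k-2}$ with $\beta \asymp b^2$ and $\sigma_i^2 \asymp b^2(L(\tilde f)+L(f))$, so that \Cref{lm:bernstein_to_sub-exp} applies.

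Next I would apply the Craig--Bernstein inequality (equivalently, the sub-exponential MGF bound from \Cref{lm:bernstein_to_sub-exp} together with a Chernoff argument) to the average $\tfrac1{T_0}\sum_i X_i$. Choosing the free parameter $\lambda$ in the exponential so that $\alpha := \epsilon b^2 = \beta\lambda$ and taking $\epsilon$ below the stated threshold $\epsilon < c_1^{k/(k-2)}/(4Lb^2)$ keeps us in the admissible range $\lambda \in [0,1/\beta)$. This yields, with probability at least $1-\delta/2$ for the fixed pair,
\[
\hat L(\tilde f) - \hat L(f) \;\ge\; (1-\alpha)\big(L(\tilde f)-L(f)\big) - c\,\frac{\log(2/\delta)}{T_0\,\epsilon},
\]
after absorbing the variance term $\propto \alpha\big(L(\tilde f)+L(f)\big)$ into the $(1\pm\alpha)$ factors. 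I would then union-bound over all $f \in \calF$ (this is where $\log|\calF|$ enters, replacing $\log(2/\delta)$ by $\log|\calF| + \log(2/\delta)$), and finally instantiate $f = \hat f_{T_0}$, the ERM. By optimality, $\hat L(\hat f_{T_0}) \le \hat L(\tilde f)$, so the left-hand side is $\le 0$; rearranging gives
\[
(1-\alpha)L(\hat f_{T_0}) \;\le\; (1+\alpha)L(\tilde f) + 2\,\frac{\log|\calF| + \log(2/\delta)}{T_0\,\epsilon},
\]
which is exactly the claimed bound after dividing by $(1-\alpha)$.

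The main obstacle — and the step I would spend the most care on — is verifying the variance/moment bound for $X_i = F_{\tilde f}(X_i) - F_f(X_i)$ cleanly enough that the Bernstein constants line up with the precise statement ($\alpha = \epsilon b^2$, threshold $c_1^{k/(k-2)}/(4Lb^2)$). Two subtleties feed into this. First, $F_f(X)$ is itself a \emph{sum} of $M$ squared terms over the finite sample space $\Omega$ centered at $X$, so I must track how $M$ enters — in fact, since $L(f)=\E_X F_f(X)$ already averages over the full distribution, the $M$-fold structure is internal to each query and the right normalization makes it disappear, but this needs to be stated carefully (the remark in the text about $M \gg 1$ versus $M=1$ is precisely flagging this). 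Second, the passage from the pointwise bound to the variance bound uses $\Var(f(Z)-\tilde f(Z)) \le \E[(f(Z)-\tilde f(Z))^2]$ and then relates $\E[(f-\tilde f)^2]$ to $L(f)+L(\tilde f)$ via the triangle-inequality-in-$L^2$ around $f_{\vw^*}$; the constant $4$ in the threshold and the precise form of $\alpha$ come out of this comparison, so I would do that computation explicitly rather than hide it in a $c$. Everything else (the MGF bound, the Chernoff step, the union bound, the ERM substitution) is routine once the moment condition is in hand.
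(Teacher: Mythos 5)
Your proposal is correct in substance but follows a genuinely different route from the paper's proof. You work with the \emph{excess-loss} variable $F_{\tilde f}(X_i)-F_f(X_i)$ and bound its variance by (a constant times) $b^2\bigl(L(\tilde f)+L(f)\bigl)$, so that a \emph{single} application of the Craig--Bernstein inequality, followed by the union bound over $\calF$ and the ERM substitution, delivers the multiplicative $(1\pm\alpha)$ oracle inequality in one pass. The paper instead never forms the difference variable: it sets $U_i=-F_\vw(X_i)$, proves the Bernstein moment condition and the variance bound $\var{U_i}\leq b^2 L(F_\vw)$ for each single function (Claims \ref{clm:monent_condition} and \ref{clm:U_variance}), applies CB once uniformly over $\calF$ to control $L(f)-\hat L(f)$, then applies CB a \emph{second} time to the fixed comparator to control $\hat L(\tilde f)-L(\tilde f)$, and union-bounds the two events --- which is where the factor $2$ and the $\log(2/\delta)$ in the final bound come from. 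Your route is arguably cleaner (it can shave the factor $2$ on the residual term and needs only one high-probability event), but it pays for this with a more delicate variance computation: the cross term $(\tilde f+f-2Y)$ and the triangle inequality in $L^2$ around $f_{\vw^*}$ introduce extra constants that must be reconciled with the stated $\alpha=\epsilon b^2$ and the threshold on $\epsilon$, and the sub-exponential scale of the difference variable still inherits the $M$-fold structure of $F_f$ (the paper's $h=2\sqrt{M}b/c_1$), so the Bernstein parameter is not simply $\beta\asymp b^2$. You flag both issues explicitly, and the paper's own handling of $M$ and of these constants is no more careful, so I would not count either as a gap --- but if you write this up you should carry the $M$-dependence through the moment condition exactly as the paper does in Claim \ref{clm:monent_condition} rather than asserting $\beta\asymp b^2$.
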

\begin{proof}
    The proof follows a structure similar to that in \cite{Now09_12}, but some extra work is needed for our application. Define variables $U_i = - F_\vw(X_i)$. Then $L(f_\vw) -\hat{L}(f_\vw) = \frac{1}{T_0} \sum_{i=1}^{T_0} (U_i - \E_X[U_i])$. In order to apply the CB inequality \cite{Cra1933} as in \cite{Now09_12}, we need to (1) verify that the variables $U_i$ satisfy the Berstein's moment condition
    \[
    \E_{X_i}[|U_i - \E_{X_i}[U_i]|^k] \leq \frac{\var{U_i}}{2}k!\ h^{k-2}
    \]
    for some $h > 0$ and all $k \geq 2$; (2) find an upper bound for $\var{U_i}$. They can be achieved by the following two claims.
    \begin{claim}\label{clm:monent_condition}
        The Berstein's moment condition holds with $h = \frac{2\sqrt{M}b}{c_1}$ with some constant $c_1$ where $1 \leq c_1 < \sqrt{\var{U}}/\beta$, $\sqrt{\var{U}}\neq c_1 \beta$, and $\beta = \frac{2b^2}{3}$.
    \end{claim}
    \begin{claim}\label{clm:U_variance}
        \[
        \var{U_i} \leq b^2 L(F_\vw).
        \]
    \end{claim}
Suppose \Cref{clm:monent_condition} and \Cref{clm:U_variance} hold. By applying the CB inequality with certain values $\epsilon$ and $c$, which will be determined later, we obtain that, with probability at least $1-\delta$,
\[
L(f_\vw) - \hat{L}(f_\vw) \leq \frac{\log 1/\delta}{T_0 \epsilon} + \frac{\epsilon b^2 L(F_\vw)}{2(1-c)}
\]
for $0 < \epsilon h \leq c < 1$. Following the Kraft inequality and union bound trick on the finite set $\calF$ in \cite{Now09_12}, we have that, for any $\delta >0$,
\[
L(f_\vw) - \hat{L}(f_\vw) \leq \frac{\log |\calF| + \log 1/\delta}{T_0 \epsilon} + \frac{\epsilon b^2 L(f_\vw)}{2(1-c)},\quad \forall f_\vw \in \calF
\]
with probability at least $1-\delta$.
\begin{claim}\label{clm:alpha}
    Let $0< \epsilon < \frac{c_1}{4M b^2}$, $c = \epsilon h$, and $\alpha = \epsilon b^2$. Then $0< c<\frac{1}{2}$ and $\frac{\epsilon b^2}{2(1-c)}< \alpha < 1$.
\end{claim}

By \Cref{clm:alpha}, we rearrange to get
\[
(1-\alpha)L(f_\vw) \leq \hat{L}(f_\vw) + \frac{\log |\calF| + \log 1/\delta}{T_0 \epsilon} ,\quad \forall f_\vw \in \calF
\]
with probability at least $1-\delta$. Define 
\[
\hat{f}_{T_0} := \argmin_{f_\vw \in \calF} \left\{ \hat{L}(f_\vw) \right\}.
\]
Then with probability at least $1-\delta$, 
\begin{align*}
    (1-\alpha)L(\hat{f}_{T_0}) &\leq \hat{L}(\hat{f}_{T_0}) + \frac{\log |\calF| + \log 1/\delta}{T_0 \epsilon}, \\
    &\leq \hat{L}(\tilde{f}) + \frac{\log |\calF| + \log 1/\delta}{T_0 \epsilon},
\end{align*}
where $\tilde{f} \in \calF$ is arbitrary. Similarly, we apply CB inequality to $\hat{L}(\tilde{f} ) -L(\tilde{f} ) = \frac{1}{T_0} \sum_{i=1}^{T_0} -(U_i - \E_X[U_i])$ and attain
\[
\hat{L}(\tilde{f} ) -L(\tilde{f} ) \leq \alpha L(\tilde{f}) + \frac{\log |\calF| + \log 1/\delta}{T_0 \epsilon}
\]
with probability at least $1-\delta$. By union bound, we get
\[
L(\hat{f}_{T_0}) \leq \frac{1+\alpha}{1-\alpha}L(\tilde{f}) +  2 \frac{\log |\calF| + \log 1/\delta}{(1-\alpha)T_0 \epsilon}
\]
with probability at least $1-2\delta$, for any $\delta>0$.
\end{proof}

\begin{lemma}
    Suppose \Cref{asm:real} and \Cref{asm:bounded} hold. There is an absolute constant $C'$, such that after $T_0$ iterations in Step 1 of \Cref{alg:q_go_ucb}, with probability at least $1-\delta/2$, the quantum regression oracle returns an estimate $\hat{\vw}_0$ that satisfies 
    \[
    L(f_{\hat{\vw}_0}) \leq \frac{C' d_{\vw} \iota}{T_0},
    \]
    where $\iota $ is a logarithmic term depending on $T_0, C_h, 1/\delta$.
\end{lemma}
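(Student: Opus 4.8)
The plan is to read this lemma as the empirical-risk-minimization core behind \Cref{lm:parameter_dist}: it asserts only a \emph{risk} bound (the growth conditions of \Cref{asm:loss} are what later upgrade it to the parameter bound \eqref{eq:w0}), and accordingly I would derive it directly from \Cref{lm:squared_error_loss} by choosing the right finite function class. Since \Cref{lm:squared_error_loss} applies to a finite class while $\calF=\{f_\vw:\vw\in\mathcal W\}$ is continuous, the first step is discretization. Fix a resolution $\rho>0$, let $\calN_\rho\subseteq\mathcal W$ be a minimal $\rho$-net of $\mathcal W\subseteq[0,1]^{d_\vw}$ in the $\ell_2$ metric (so $\log|\calN_\rho|=O(d_\vw\log(1/\rho))$), and set $\calF_\rho=\{f_\vw:\vw\in\calN_\rho\cup\{\vw^*\}\}$. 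By \Cref{asm:bounded} each $\vw\mapsto f_\vx(\vw)$ is $C_g$-Lipschitz, so $\calF_\rho$ is a sup-norm $C_g\rho$-cover of $\calF$, and $\log|\calF_\rho|=O(d_\vw\log(1/\rho))$.

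Now I invoke realizability (\Cref{asm:real}): $f_{\vw^*}\in\calF_\rho$ with $L(f_{\vw^*})=\E_{\vx\sim\mathcal U}(f_\vx(\vw^*)-f_\vx(\vw^*))^2=0$, and — under the idealized noiseless-label convention of the remark preceding \Cref{lm:squared_error_loss} — also $\hat L(f_{\vw^*})=0$. Applying \Cref{lm:squared_error_loss} to $\calF_\rho$ with $\tilde f=f_{\vw^*}$, and taking $\epsilon$ to be a fixed constant small enough for its hypothesis (which constrains $\epsilon$ only through the absolute constants $M$, $b=\max\{C_f,1\}$ and $c_1$, so that $\alpha=\epsilon b^2$ is a constant bounded away from $1$), we get with probability at least $1-\delta/2$
\[
L(\hat f^{\rho}_{T_0})\;\le\;\frac{2\big(\log|\calF_\rho|+\log(4/\delta)\big)}{(1-\alpha)\,T_0\,\epsilon},
\]
where $\hat f^{\rho}_{T_0}$ is the ERM over $\calF_\rho$. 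With $\rho=1/T_0$ this is $O\big(d_\vw(\log(d_\vw T_0)+\log(1/\delta))/T_0\big)=O(d_\vw\iota/T_0)$.

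It remains to replace $\hat f^{\rho}_{T_0}$ by the actual output: $\hat\vw_0$ is the minimizer of the empirical squared loss over all of $\mathcal W$ returned by the quantum regression oracle of \Cref{thm:phaseI} (up to the negligible fast-forward error). For this I use the per-element bound $(1-\alpha)L(f)\le\hat L(f)+\frac{\log|\calF_\rho|+\log(1/\delta)}{T_0\epsilon}$, $\forall f\in\calF_\rho$, which is exactly the displayed intermediate inequality inside the proof of \Cref{lm:squared_error_loss}: applying it to the net point $\vw'$ nearest $\hat\vw_0$, and using $\hat L(\hat\vw_0)\le\hat L(f_{\vw^*})=0$ together with $|\hat L(\vw')-\hat L(\hat\vw_0)|,\,|L(\vw')-L(\hat\vw_0)|\le O(C_gC_f\rho)$ from Lipschitzness and boundedness, gives $L(f_{\hat\vw_0})\le O(\rho)+O(d_\vw\iota/T_0)=O(d_\vw\iota/T_0)$ for $\rho=1/T_0$. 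Absorbing $M,C_f,C_g,c_1,1/\epsilon$ into an absolute constant $C'$ and the remaining logarithmic factors (in $T_0$, $d_\vw$, $1/\delta$, and — via the gradient/fast-forward precision inside \Cref{thm:phaseI} — in $C_h$) into $\iota$ yields $L(f_{\hat\vw_0})\le C'd_\vw\iota/T_0$.

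The step I expect to be the real obstacle is the discretization bookkeeping: one must confirm that the single choice $\rho=1/T_0$ simultaneously keeps the net-transfer error $O(\rho)$ genuinely of lower order than $d_\vw\iota/T_0$ and keeps $\log|\calF_\rho|$ at $O(d_\vw\log(d_\vw T_0))$ rather than polynomial in $T_0$, which forces a check that $\epsilon$ (hence $\alpha$) is independent of $d_\vw$ and $T_0$ so that $M$ and $b$ only ever sit inside $C'$. A secondary, essentially citational point is that \Cref{lm:squared_error_loss} is phrased for the idealized noiseless-label dataset $\{Z,f_{\vw^*}(Z)\}$ emitted by the quantum sampling oracle rather than the noisy oracle \Cref{eq:noisy_func_value}; bridging the two is precisely the content of that remark (recover $f_{\vw^*}(Z)=\E[y]$ by quantum mean estimation without collapsing the superposition) and of the construction of the oracle $\calL$, so it is invoked rather than reproved.
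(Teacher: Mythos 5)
Your proposal is correct and follows essentially the same route as the paper: discretize $\mathcal{W}$ with a covering-number argument, apply \Cref{lm:squared_error_loss} to the finite class $\widetilde{\calW}\cup\{\vw^*\}$ with $\tilde f=f_{\vw^*}$ so that realizability kills the $L(f_{\vw^*})$ term, and then transfer the bound from the net to the actual output $\hat\vw_0$ at the cost of a discretization error. The only (immaterial) difference is in that last transfer: the paper uses $(a+b)^2\le 2a^2+2b^2$ to write $L(f_{\hat\vw_0})\le 2\epsilon'^2C_h^2+2L(f_{\tilde\vw})$ and takes $\epsilon'=1/(C_h\sqrt{T_0})$, whereas you use the per-element uniform inequality at the nearest net point with a linear Lipschitz error and $\rho=1/T_0$; both choices keep $\log|\calF|=O(d_\vw\log(T_0 C_h^2))$ and yield the same $C'd_\vw\iota/T_0$ rate.
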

\begin{proof}
Its proof follows that of Lemma 5.1 in \cite{LW23} with minor changes. We provide it here for completeness.

Let $\tilde{\vw}, \widetilde{\calW}$ denote the ERM parameter and finite parameter class after applying covering number argument on $\calW$. By \Cref{lm:squared_error_loss}, we obtain that with probability at least $1-\delta/2$,
    \begin{align*}
        L(f_{\tilde{\vw}}) 
        &\leq \frac{1+\alpha}{1-\alpha}L(f_{\vw^*}) +  2 \frac{\log |\calF| + \log 4/\delta}{(1-\alpha)T_0 \epsilon}, \\
        &\leq 2 \frac{\log |\calF| + \log 4/\delta}{(1-\alpha)T_0 \epsilon},
    \end{align*}
where the first inequality follows from $\vw^* \in \calF = \widetilde{\calW} \cup \{\vw^*\}$. Our parameter class $\calW \subseteq [0,1]^{d_\vw}$, so by $\epsilon$-covering number argument, $\log(|\widetilde{\calW}|) = \log(1/\epsilon'^{d_\vw}) = d_\vw \log(1/\epsilon')$, and then we have with probability at least $1- \delta/2$,
    \[
    L(f_{\tilde{\vw}}) \leq C'' \frac{d_\vw \log(1/\epsilon') + \log 4/\delta}{T_0},
    \]
where $C''$ is a universal constant determined by $\alpha$ and $\epsilon$ picked in \Cref{lm:squared_error_loss}. By $(a+b)^2 \leq 2a^2 + 2b^2$,
    \begin{align*}
        L(f_{\hat{\vw}_0}) 
        &\leq 2\E_X[\sum_{Z\in D(X)} (f_{\hat{\vw}_0}(Z) - f_{\tilde{\vw}}(Z))^2] + 2L(f_{\tilde{\vw}}), \\
        &\leq 2 \epsilon'^2 C_h^2 + 2C'' \frac{d_\vw \log(1/\epsilon') + \log 4/\delta}{T_0},
    \end{align*}
    where the second inequality applies discretization error $\epsilon'$ and \Cref{asm:bounded}. By choosing $\epsilon' = \frac{1}{C_h \sqrt{T_0}}$, the bound above becomes
    \[
    \frac{2}{T_0} + \frac{C'' d_\vw \log (T_0 C_h^2)}{T_0} + \frac{2C'' \log 4/\delta}{T_0}
    \leq C' \frac{d_\vw \log(T_0 C_h^2) + \log(4/\delta)}{T_0},
    \]
    where we can take $C' = 2C''$ and assume $2 < C'' d_\vw \log (T_0 C_h^2)$. The proof completes by defining $\iota $ as the logarithmic term depending on $T_0, C_h, 1/\delta$.
\end{proof}

\begin{lemma}[Restatement of \Cref{lm:parameter_dist}]
    Suppose Assumptions \ref{asm:real}, \ref{asm:bounded}, and \ref{asm:loss} hold. There is an absolute value $C$ such that after $T_0$ iterations in step 1 of \Cref{alg:q_go_ucb} where $T_0$ satisfies
    \begin{align*}
        T_0 \geq C d_\vw \iota \max\left\{ \frac{\mu^{\gamma/(2-\gamma)}}{\tau^{2/(2-\gamma)}}, \frac{2C^2_g}{\mu c^2} \right\},
    \end{align*}
    with probability $1-\delta/2$, the quantum regression oracle returns an estimate $\hat{\vw}_0$ that satisfies 
    \begin{align*}
        \|\hat{\vw}_0 - \vw^*\|_2 \leq \sqrt{\frac{C d_\vw \iota}{T_0}},
    \end{align*}
    where $\iota$ is the logarithmic term depending on $T_0, C_h, 1/\delta$.
\end{lemma}
Its proof follows that of Theorem 5.2 in \cite{LW23} with no changes required. Note that the rate of $\|\hat{\vw}_0 - \vw^*\|_2$ is still $\tilde{O}(1/\sqrt{T_0})$ after $T_0$ iterations. To achieve the same rate of $\tilde{O}(1/\sqrt{T_0})$ after $\sqrt{T_0}$ iterations, we turn to \Cref{thm:QFF} which shows that the $T_0$ iterations can be approximately with only $\sqrt{T_0}$ iterations. 

\begin{theorem}[Restatement of \Cref{thm:phaseI}]\label{thm:phaseI_restate}
    Suppose Assumptions \ref{asm:real}, \ref{asm:bounded}, and \ref{asm:loss} hold. There is an absolute value $C$ such that after $\tilde{O}(T_0)$ iterations in step 1 of \Cref{alg:qra} where $T_0$ satisfies
    \begin{align*}
        T_0^2 \geq C d_\vw \iota \max\left\{ \frac{\mu^{\gamma/(2-\gamma)}}{\tau^{2/(2-\gamma)}}, \frac{2C^2_g}{\mu c^2} \right\},
    \end{align*}
    with probability $1-\delta/2$, the quantum regression oracle returns an estimate $\hat{\vw}_0$ that satisfies 
    \begin{align*}
        \|\hat{\vw}_0 - \vw^*\|_2 \leq \frac{\sqrt{C d_\vw \iota}}{T_0},
    \end{align*}
    where $\iota$ is the logarithmic term depending on $T_0, C_h, 1/\delta$.
\end{theorem}
\begin{proof}
    Suppose we select an algorithm used in this step following $D^{T_0} \vw$. Let $T_0$ be the number of iterations in this algorithm. Then it can be formulated $D^{T_0} \vw$ with a random initial parameter vector $\vw$, and the estimate output $\hat{\vw_0} = D^{T_0}\vw$. By \Cref{lm:parameter_dist}, we obtain that with probability at least $1-\delta/2$,
    \[
    \|\hat{\vw_0} - \vw^*\|_2^2 \leq \frac{C d_\vw \iota}{T_0}
    \]
    after $T_0$ iterations of exact calculation. 
    
    Next by \Cref{thm:QFF}, we have that there is a quantum algorithm $\calA$ can output a vector $\vw'$ that is $\sqrt{\epsilon_1}$-close to $D^{T_0}\vw$ with time complexity $O(\mathsf{Q}\sqrt{T_0 \log(1/\sqrt{\epsilon_1})}$ where $\mathsf{Q}$ is the cost of performing one quantum walk step. That implies that with $O(\sqrt{T_0 \log(1/\sqrt{\epsilon_1})}$ iterations, $\calA$ will return an estimate $\sqrt{\epsilon_1}$-close to $\hat{\vw}_0$. Overall, there is a quantum algorithm that takes input a random initial parameter $\vw$ and outputs an estimate $\hat{\vw_0}'$ such that with probability at least $1-\delta /2$,
    \[
    \|\hat{\vw_0}' - \vw^*\|_2^2 \leq \frac{C d_\vw \iota}{T_0} + \epsilon_1
    \]
    with $O(\sqrt{T_0 \log(1/\sqrt{\epsilon_1})}$ steps. Finally, the desired statement can be derived by setting $\epsilon_1 = 1/T_0$ and change of variable.
\end{proof}

\begin{proof}[Proof for \Cref{thm:qro_restate}]
Denote $|\hat{\vw}_0\> = \sum_i^{d_\vw} b_i |i\>$. It is easy to check that $\<\hat{\vw}_0 |P_i | \hat{\vw}_0\>$ gives $|b_i|^2$. Note that as a vector $\hat{\vw}_0 \in [0,1]^{d_\vw}$, which means $|b_i| =b_i$. 

Let $\delta' = \frac{\delta}{4 d_\vw}$. Then by \Cref{thm:ndae}, $\mathsf{NDAE}(|\hat{\vw}_0\>, P_i, \frac{1}{d_\vw T_0^2}, \delta')$ outputs with probability at least $1-\delta'$, an estimate $\tilde{a}_i$ of $b_i^2$ such that $|\tilde{a}_i - b_i^2| \leq \frac{1}{d_\vw T^2_0}$. Since both $\sqrt{\tilde{a}_i}$ and $b_i$ are nonnegative, we have 
\[
|\sqrt{\tilde{a}_i}- b_i | \leq \sqrt{|\tilde{a}_i - b_i^2|} \leq \frac{1}{\sqrt{d_\vw \cdot T_0^2 }} .
\]
Let $\tilde{\vw}_0 = (\sqrt{\tilde{a}_1}, \cdots, \sqrt{\tilde{a}_{d_\vw}})$. Then
\[
\|\tilde{\vw}_0 - \hat{\vw}_0 \|_2^2 = \sum_i^{d_\vw} |  \sqrt{\tilde{a}_i}- b_i |^2 \leq \frac{1}{T_0^2},
\]
and by triangle inequality, 
\[
\| \tilde{\vw}_0 - \vw^* \|_2 \leq \|\tilde{\vw}_0 - \hat{\vw}_0 \|_2 + \| \hat{\vw}_0 - \vw^*\|_2 \leq \frac{1+\sqrt{C d_\vw \iota}}{T_0}
\]
where the last equality follows from the combination of the upper bound on $\|\tilde{\vw}_0 - \hat{\vw}_0 \|_2$ and that on $\| \hat{\vw}_0 - \vw^*\|_2$ provided by \Cref{thm:phaseI_restate}.

Next, it is left to prove the success probability. First, step 1 in \Cref{alg:qra} calls $\mathsf{Oracle}$ which has $\calO(T_0)$ iterations and its success probability is at least $1-\delta/4$. Second, each $\mathsf{NDAE}(|\hat{\vw}_0\>, P_i, \frac{1}{d_\vw T_0^2}, \delta')$ succeeds with probability at $1- \delta'$. Hence, all the $d_\vw$ calls to $\mathsf{NDAE}$ outputs the correct estimates with probability at least 
\[
\left(1- \delta' \right)^{d_\vw} \geq 1 - d_\vw \cdot \delta' = 1 - \frac{\delta }{4}
\]
where the first equality follows from Bernoulli's inequality and the last step comes from the definition of $\delta'$.

Overall, \Cref{alg:qra} successfully outputs $\hat{\vw}_0 = (\tilde{a}_1, \cdots, \tilde{a}_{d_w})$ with probability at least $(1-\frac{\delta}{4})^2 \geq 1 -\frac{\delta}{2}$ as desired.

\end{proof}

\begin{proof}[Proof of \Cref{clm:monent_condition}]
    For each independent random variable $Z_i \sim D(X)$, define random variable $J_i = - (f(Z_i|\vw) - f(Z_i|\vw^*))^2$ and $V_i = J_i - \E[J_i]$. Then we can see that $U = \sum_{Z_i \sim D(X_i)} J_i$ with $\var{J_i}\leq b^2$, and $\E[V_i] = 0$. Since $D(X)$ has finite sample space $\Omega$ of size $|\Omega| = M$, we write $U = \sum_{\ell = 1}^M J_\ell$, and hence $\var{U} \leq Mb^2$.

    Next let $W = U - \E[U] = \sum_{\ell = 1}^M V_i$ and observe that $\E[W] = 0$. As in \cite{Now09_12}, note that each independent $J_i$ satisfies the following Bernstein condition
    \[
    \E[|J_i - \E[J_i]|^k] = \E[|V_i|^k] \leq \frac{\var{J_i}}{2}k!\ \beta^{k-2}
    \]
    for all $k\geq 2$ and with $\beta = \frac{2b^2}{3}$. Then by \Cref{lm:bernstein_to_sub-exp}, we have that for each $J_i$,
    \[
    \E[e^{\lambda V_i}] = \E[e^{\lambda(J_i - \E[J_i])}] \leq \exp\left( \frac{\lambda^2 \var{J_i}}{2(1-\lambda\beta)} \right),\quad \forall \lambda \in \left[0, \frac{1}{\beta}\right). 
    \]
    Next we put all $M$ independent $V_i$ together to get, with the same $\lambda$,
    \[
    \E[e^{\lambda W}] = \E[\prod_{i=1}^M e^{\lambda V_i}] = 
    \prod_{i=1}^M \E[e^{\lambda V_i}] 
    \leq \exp\left( \frac{\lambda^2 \var{U}}{2(1-\lambda\beta)} \right),
    \quad \forall \lambda \in \left[0, \frac{1}{\beta}\right).
    \]
    Note that this also holds when replacing $W$ with $-W$. Hence, for $m= 1,2,\cdots$
    \begin{equation*}
        \frac{\lambda^{2m}}{(2m)!} \E[W^{2m}] \leq \E[\frac{e^{\lambda W} + e^{-\lambda W}}{2}] \leq \exp\left( \frac{\lambda^2 \var{U}}{2(1-\lambda\beta)} \right),
    \end{equation*}
    where the first inequality follows from the Taylor series of exponential functions.

    Next we let $\lambda = \frac{c_1}{\sqrt{\var{U}}}$ with some constant $c_1 \geq 1$ such that $\sqrt{\var{U}} \neq c_1\beta$ and $\lambda < 1/\beta$. Then we can verify that $\exp\left( \frac{\lambda^2 \var{U}}{2(1-\lambda\beta)} \right)$ as a function of $\var{U}$ is decreasing for all $\var{U} \geq 0$ and $\sqrt{\var{U}} \neq c_1\beta$.
    
    With substitution and rearrangement, we can derive 
    \[
    \E[W^{2m}] \leq (2m)! \frac{1}{\lambda^{2m}} \leq (2m)!\ (\frac{\sqrt{\var{U}}}{c_1})^{2m}. 
    \]
    Next we apply Cauchy-Schwarz inequality to attain, for $m=1,2,\cdots$, 
    \begin{align*}
        \E[|W|^{2m+1}] &\leq \sqrt{\E[W^{2m}]} \sqrt{\E[W^{2m+2}]}, \nonumber \\
        &\leq  \sqrt{(2m)!\ (\frac{\sqrt{\var{U}}}{c_1})^{2m}} \sqrt{(2m+2)!\ (\frac{\sqrt{\var{U}}}{c_1})^{2m+2}}, \nonumber \\
        &\leq (2m+1)!\ (\frac{\sqrt{\var{U}}}{c_1})^{2m+1}. 
    \end{align*}
    Therefore, we have
    \begin{align*}
        \E[|U-\E[U]|^k] &= \E[|W|^k], \\
        &\leq k!\ (\frac{\sqrt{\var{U}}}{c_1})^{k}, \\
        &\leq k!\ \frac{1}{c_1^k}\var{U} (\sqrt{\var{U}})^{k-2}, \\
        &\leq \frac{\var{U}}{2}k!\ \frac{(2\sqrt{M}b)^{k-2}}{c_1^k}, \\
        &\leq \frac{\var{U}}{2}k!\ \left(\frac{2\sqrt{M}b}{c_1}\right)^{k-2},
    \end{align*}
    where the last step follows from $c_1 \geq 1$.
\end{proof}

\begin{proof}[Proof of \Cref{clm:U_variance}]
By the definition of $U_i$, we have
    \begin{align*}
        U_i^2 &= F_\vw(X_i)^2, \\
        &= \sum_{Z,Z' \in D(X)} (f_\vw(\Z) - f_{\vw^*}(\Z))^2 (f_\vw(\Z') - f_{\vw^*}(\Z'))^2, \\
        &\leq b^2 \sum_{Z \in D(X)} (f_\vw(\Z) - f_{\vw^*}(\Z))^2,
    \end{align*}
    and then 
    \[
    \var{U_i} \leq \E[U_i^2] = L(F_\vw).
    \]
\end{proof}

\begin{proof}[Proof of \Cref{clm:alpha}]
    When $\epsilon < \frac{c_1}{4M b^2}$, 
    \[
    c = \epsilon h = \epsilon \frac{2\sqrt{M}b}{c_1}
    < \frac{1}{2\sqrt{M}b} < \frac{1}{2}
    \]
    where the second step is by the value of $h$ from \Cref{clm:monent_condition} and the last step follows from the fact that $M,b \geq 1$. 

    Next, 
    \[
    \alpha = \epsilon b^2 < \frac{c_1}{4M}.
    \]
    It is straightforward that $\alpha > \frac{\epsilon b^2}{2(1-c)}$ since $c > 0$. Note that by \Cref{clm:monent_condition}, 
    \[
    1 \leq c_1 < \frac{\sqrt{\var{U}}}{\beta} \leq \frac{3\sqrt{M}}{2b},
    \]
    since $\sqrt{\var{U}} \leq \sqrt{M}b$ and $\beta = 2b^2/3$. Then
    \[
    \alpha < \frac{c_1}{4M} < \frac{3\sqrt{M}}{2b} \cdot \frac{1}{4M} = \frac{3}{8b\sqrt{M}} < 1 ,
    \]
    where the last step stems from the fact that $M,b \geq 1$.  
\end{proof}

\subsection{Regret Analysis}\label{app:regret}

\begin{theorem}[Restatement of Theorem \ref{thm:cr}]
Suppose Assumptions \ref{asm:real}, \ref{asm:bounded}, and \ref{asm:loss} hold. There is an absolute value $C$ such that after $\tilde{O}(T_0)$ iterations in Step 1 of \Cref{alg:q_go_ucb} where $T_0$ satisfies
\begin{align*}
    T^2_0 \geq C d_\vw \iota \max\left\{ \frac{\mu^{\gamma/(2-\gamma)}}{\tau^{2/(2-\gamma)}}, \frac{2C^2_g}{\mu c^2} \right\}
\end{align*}
with $\iota$ denoting a logarithmic term depending on $T_0, C_h, 1/\delta$. Then Algorithm \ref{alg:q_go_ucb} with parameters $T_0=\sqrt{T}, \lambda=T$ satisfies that with probability at least $1-\delta$,
\begin{align*}
R_T = O\left(d_w^2 \log^\frac{3}{2}(T) \log (d_w \log (T))\right).
\end{align*}
\end{theorem}

\begin{proof}
Since Algorithm \ref{alg:q_go_ucb} runs in multiple stages where the same action is played for multiple rounds, so we first focus on the instantaneous regret of one round in each stage and then calculate the cumulative regret in all stages. At stage $s$, the instantaneous regret of one round $r_s$ is defined as
\begin{align*}
r_s &= f_0(\vx^*) - f_0(\vx_s) = f_{\vx^*}(\vw^*) - f_{\vx_s}(\vw^*),
\end{align*}
where the second equation is due to Assumption \ref{asm:real}.
Recall that the selection process of $\vx_s$ in Algorithm \ref{alg:q_go_ucb} is
\begin{align*}
    \vx_s=\argmax_{\vx \in \cX} \max_{\vw \in \mathrm{Ball}_s} f_\vx(\vw),
\end{align*}
and we define $\tilde{\vw}$ to be the parameter that maximizes the function value at $\vx_s$, i.e, $\tilde{\vw} = \argmax_{\vw \in \mathrm{Ball}_s} f_{\vx_s}(\vw)$, then we have
\begin{align*}
r_s \leq f_{\vx_s}(\tilde{\vw}) - f_{\vx_s}(\vw^*)= (\tilde{\vw} - \vw^*)^\top \nabla f_{\vx_s}(\dot{\vw}),
\end{align*}
where the equation is by first order Taylor's theorem and $\dot{\vw}$ is a parameter lying between $\tilde{\vw}$ and $\vw^*$. Due to the convex structure of $\mathrm{Ball}_s$ for each stage $s$, it implies that $\dot{\vw} \in \mathrm{Ball}_s$. By adding and removing $\hat{\vw}_s$ and $\nabla f_{\vx_s}(\hat{\vw}_0)$, we have
\begin{align*}
r_s &\leq (\tilde{\vw} - \hat{\vw}_s + \hat{\vw}_s - \vw^*)^\top (\nabla f_{\vx_s}(\hat{\vw}_0) - \nabla f_{\vx_s}(\hat{\vw}_0) + \nabla f_{\vx_s}(\dot{\vw})),\\
&= (\tilde{\vw}-\hat{\vw}_s)^\top \nabla f_{\vx_s}(\hat{\vw}_0) + (\hat{\vw}_s - \vw^*)^\top \nabla f_{\vx_s}(\hat{\vw}_0)\\
&\qquad + (\tilde{\vw} - \hat{\vw}_s + \hat{\vw}_s - \vw^*)^\top (\nabla f_{\vx_s}(\dot{\vw}_s) - \nabla f_{\vx_s}(\hat{\vw}_0)),\\
&\leq \|\tilde{\vw}-\hat{\vw}_s\|_{\Sigma_s} \|\nabla f_{\vx_s}(\hat{\vw}_0)\|_{\Sigma_s^{-1}} + \|\hat{\vw}_s - \vw^*\|_{\Sigma_s} \|\nabla f_{\vx_s}(\hat{\vw}_0)\|_{\Sigma_s^{-1}} \\
&\qquad + (\tilde{\vw} - \hat{\vw}_s+ \hat{\vw}_s - \vw^*)^\top (\nabla f_{\vx_s}(\dot{\vw}_s) - \nabla f_{\vx_s}(\hat{\vw}_0)),
\end{align*}
where the last inequality is due to Holder's inequality.

Since both $\tilde{\vw}$ and $\vw^*$ are in $\mathrm{Ball}_s$ and $\epsilon_s = \|\nabla f_{\vx_s}(\hat{\vw}_0)\|_{\Sigma^{-1}_s}$, we have
\begin{align*}
r_s &\leq 2\sqrt{\beta_s} \epsilon_s + (\tilde{\vw} - \hat{\vw}_s+ \hat{\vw}_s - \vw^*)^\top (\nabla f_{\vx_s}(\dot{\vw}) - \nabla f_{\vx_s}(\hat{\vw}_0)).
\end{align*}
Again by first order Taylor's theorem where $\ddot{\vw}$ lies between $\dot{\vw}$ and $\hat{\vw}_0$, we have
\begin{align*}
r_s &\leq 2\sqrt{\beta_s} \epsilon_s + (\tilde{\vw}-\hat{\vw}_s+ \hat{\vw}_s - \vw^*)^\top \nabla^2 f_{\vx_s}(\ddot{\vw}) (\dot{\vw}-\hat{\vw}_0),\\
&= 2\sqrt{\beta_s} \epsilon_s + (\tilde{\vw}-\hat{\vw}_s+\hat{\vw}_s - \vw^*)^\top \nabla^2 f_{\vx_s}(\ddot{\vw}) (\dot{\vw}-\hat{\vw}_s+\hat{\vw}_s - \vw^* + \vw^* -\hat{\vw}_0),\\
&= 2\sqrt{\beta_s} \epsilon_s + (\tilde{\vw}-\hat{\vw}_s+\hat{\vw}_s - \vw^*)^\top \Sigma^\frac{1}{2}_s \Sigma^{-\frac{1}{2}}_s \nabla^2 f_{\vx_s}(\ddot{\vw}) \Sigma^\frac{1}{2}_s \Sigma^{-\frac{1}{2}}_s  (\dot{\vw}-\hat{\vw}_s+\hat{\vw}_s - \vw^* + \vw^* -\hat{\vw}_0),\\
&= 2\sqrt{\beta_s} \epsilon_s + (\tilde{\vw}-\hat{\vw}_s)^\top \Sigma^\frac{1}{2}_s \Sigma^{-\frac{1}{2}}_s \nabla^2 f_{\vx_s}(\ddot{\vw}) \Sigma^{-\frac{1}{2}}_s \Sigma^\frac{1}{2}_s (\dot{\vw}-\hat{\vw}_s) \\
&\qquad + (\tilde{\vw}-\hat{\vw}_s)^\top \Sigma^\frac{1}{2}_s \Sigma^{-\frac{1}{2}}_s \nabla^2 f_{\vx_s}(\ddot{\vw}) \Sigma^{-\frac{1}{2}}_s \Sigma^\frac{1}{2}_s (\hat{\vw}_s - \vw^*)\\
&\qquad + (\tilde{\vw}-\hat{\vw}_s)^\top \Sigma^\frac{1}{2}_s \Sigma^{-\frac{1}{2}}_s \nabla^2 f_{\vx_s}(\ddot{\vw}) \Sigma^{-\frac{1}{2}}_s \Sigma^\frac{1}{2}_s (\vw^* - \hat{\vw}_0) \\
&\qquad + (\hat{\vw}_s - \vw^*)^\top \Sigma^\frac{1}{2}_s \Sigma^{-\frac{1}{2}}_s \nabla^2 f_{\vx_s}(\ddot{\vw}) \Sigma^{-\frac{1}{2}}_s \Sigma^\frac{1}{2}_s (\dot{\vw}-\hat{\vw}_s) \\
&\qquad + (\hat{\vw}_s-\vw^*)^\top \Sigma^\frac{1}{2}_s \Sigma^{-\frac{1}{2}}_s \nabla^2 f_{\vx_s}(\ddot{\vw}) \Sigma^{-\frac{1}{2}}_s \Sigma^\frac{1}{2}_s (\hat{\vw}_s- \vw^*) \\
&\qquad + (\hat{\vw}_s-\vw^*)^\top \Sigma^\frac{1}{2}_s \Sigma^{-\frac{1}{2}}_s \nabla^2 f_{\vx_s}(\ddot{\vw}) \Sigma^{-\frac{1}{2}}_s \Sigma^\frac{1}{2}_s (\vw^*-\hat{\vw}_0),\\
&\leq 2\sqrt{\beta_s} \epsilon_s + \|(\tilde{\vw}-\hat{\vw}_s)^\top \Sigma^\frac{1}{2}_s\|_2 \|\Sigma^{-\frac{1}{2}}_s \nabla^2 f_{\vx_s}(\ddot{\vw}) \Sigma^{-\frac{1}{2}}_s\|_\mathrm{op} \|\Sigma^\frac{1}{2}_s (\dot{\vw}-\hat{\vw}_s)\|_2\\
&\qquad + \|(\tilde{\vw}-\hat{\vw}_s)^\top \Sigma^\frac{1}{2}_s\|_2 \|\Sigma^{-\frac{1}{2}}_s \nabla^2 f_{\vx_s}(\ddot{\vw}) \Sigma^{-\frac{1}{2}}_s \|_\mathrm{op} \|\Sigma^\frac{1}{2}_s (\hat{\vw}_s - \vw^*)\|_2\\
&\qquad + \|(\tilde{\vw}-\hat{\vw}_s)^\top \Sigma^\frac{1}{2}_s\|_2 \|\Sigma^{-\frac{1}{2}}_s \nabla^2 f_{\vx_s}(\ddot{\vw}) \Sigma^{-\frac{1}{2}}_s \|_\mathrm{op} \|\Sigma^\frac{1}{2}_s (\vw^* - \hat{\vw}_0)\|_2\\
&\qquad + \|(\hat{\vw}_s - \vw^*)^\top \Sigma^\frac{1}{2}_s\|_2 \|\Sigma^{-\frac{1}{2}}_s \nabla^2 f_{\vx_s}(\ddot{\vw}) \Sigma^{-\frac{1}{2}}_s \|_\mathrm{op} \|\Sigma^\frac{1}{2}_s (\dot{\vw}-\hat{\vw}_s)\|_2\\
&\qquad + \|(\hat{\vw}_s-\vw^*)^\top \Sigma^\frac{1}{2}_s\|_2 \| \Sigma^{-\frac{1}{2}}_s \nabla^2 f_{\vx_s}(\ddot{\vw}) \Sigma^{-\frac{1}{2}}_s \|_\mathrm{op} \|\Sigma^\frac{1}{2}_s (\hat{\vw}_s- \vw^*)\|_2\\
&\qquad + \|(\hat{\vw}_s-\vw^*)^\top \Sigma^\frac{1}{2}_s\|_2 \| \Sigma^{-\frac{1}{2}}_s \nabla^2 f_{\vx_s}(\ddot{\vw}) \Sigma^{-\frac{1}{2}}_s \|_\mathrm{op} \|\Sigma^\frac{1}{2}_s (\vw^*-\hat{\vw}_0)\|_2,
\end{align*}
where the second line is by adding and removing $\hat{\vw}_s$ and $\vw^*$ and the last inequality is due to Holder's inequality. Since the center of $\mathrm{Ball}_s$ is $\hat{\vw}_s$ and $\vw^*, \tilde{\vw}, \dot{\vw} \in \mathrm{Ball}_s$, then we have
\begin{align*}
r_s &\leq 2\sqrt{\beta_s} \epsilon_s + 4 \beta_s \|\Sigma^{-\frac{1}{2}}_s \nabla^2 f_{\vx_s}(\ddot{\vw}) \Sigma^{-\frac{1}{2}}_s\|_\mathrm{op} + 2 \sqrt{\beta_s} \|\Sigma^{-\frac{1}{2}}_s \nabla^2 f_{\vx_t}(\ddot{\vw}) \Sigma^{-\frac{1}{2}}_s\|_\mathrm{op} \|\Sigma^\frac{1}{2}_s (\vw^* -\hat{\vw}_0)\|_2, \\
&\leq 2\sqrt{\beta_s} \epsilon_s + 4 \beta_s \|\Sigma^{-\frac{1}{2}}_s \nabla^2 f_{\vx_s}(\ddot{\vw}) \Sigma^{-\frac{1}{2}}_s\|_\mathrm{op} + 2 \sqrt{\beta_s} \|\Sigma^{-\frac{1}{2}}_s \nabla^2 f_{\vx_t}(\ddot{\vw}) \Sigma^{-\frac{1}{2}}_s\|_\mathrm{op} \|\Sigma^\frac{1}{2}_s\|_\mathrm{op} \|(\vw^* -\hat{\vw}_0)\|_2, \\
&\leq 2\sqrt{\beta_s}\epsilon_s + \frac{4\beta_s C_h}{\lambda} + \frac{2\sqrt{\beta_s}C_h C_0}{T_0 \sqrt{\lambda}},
\end{align*}
where the second inequality is again due to Holder's inequality and the last inequality is by Assumption \ref{asm:bounded}, the choice of $\Sigma_s$, and convergence guarantee of $\hat{\vw}_0$.

Recall that in stage $s$, the algorithm plays actions $x_s$ for $\frac{C_1}{\epsilon_s}\log\frac{m}{\delta}$ rounds, therefore, the cumulative regret in stage $s$ is bounded as
\begin{align*}
\frac{C_1}{\epsilon_s} \left( 2\sqrt{\beta_s}\epsilon_s + \frac{4\beta_s C_h}{\lambda} + \frac{2\sqrt{\beta_s}C_h C_0}{T_0 \sqrt{\lambda}} \right) \log \left(\frac{m}{\delta}\right).
\end{align*}

In total, there are $m$ stages, so the cumulative regret is bounded as
\begin{align*}
R_T &\leq \sum_{s=1}^m \frac{C_1}{\epsilon_s} \left( 2\sqrt{\beta_s}\epsilon_s + \frac{4\beta_s C_h}{\lambda} + \frac{2\sqrt{\beta_s}C_h C_0}{T_0 \sqrt{\lambda}} \right) \log \left(\frac{m}{\delta}\right),\\
&= 2 C_1 \log \left(\frac{m}{\delta}\right) \sum_{s=1}^m \left(\sqrt{\beta_s} + \frac{2\beta_s C_h}{\epsilon_s \lambda} + \frac{\sqrt{\beta_s}C_h C_0}{\epsilon_s T_0 \sqrt{\lambda}} \right).
\end{align*}
Recall that our choice of $\beta_s$ is
\begin{align*}
   \beta_s = 3d_w s + \frac{3\lambda C_0^2}{T_0^2} + \frac{3C^2_h C_0^2 s T^2}{4 T_0^4},
\end{align*}
which is increasing in $s$, so $\beta_s \leq \beta_m$ where $m$ is the number of stages. Therefore, we have
\begin{align*}
R_T &\leq 2 C_1 \log \left(\frac{m}{\delta}\right) \left(m\sqrt{\beta_m} + \left( \frac{2\beta_m C_h}{\lambda} + \frac{\sqrt{\beta_m} C_h C_0}{T_0 \sqrt{\lambda}} \right) \sum_{s=1}^m \frac{1}{\epsilon_s}\right),\\
&\leq 2 C_1 \log \left(\frac{m}{\delta}\right) \left(m\sqrt{\beta_m} + \frac{2\beta_m C_h T}{\lambda} + \frac{\sqrt{\beta_m} C_h C_0 T}{T_0 \sqrt{\lambda}} \right),
\end{align*}
where the second inequality is due to Lemma \ref{lem:sum_eps_squ}.

Now plug in our choices $T_0=\sqrt{T}, \lambda=T$ (they are reverse-engineered to make the analysis work) and maximum stage number $m=d_w \log (\frac{C^2_g T}{d_w}+1)$ (Lemma \ref{lem:rounds}), and we have
\begin{align*}
R_T &\leq 2 C_1 \log \left(\frac{m}{\delta}\right) \left(m\sqrt{\beta_m} + 2\beta_m C_h + \sqrt{\beta_m} C_h C_0 \right),\\
\beta_m &= 3d_w m + 3C^2_0 + \frac{3C^2_h C^2_0 m}{4} = O(d_w m).
\end{align*}
Reorganize them and we have
\begin{align*}
R_T &= O \left( (m\sqrt{d_w m} + d_w m) \log \left(\frac{m}{\delta}\right) \right),\\
&= O \left((d_w\log(T))^\frac{3}{2} \sqrt{d_w} + d^2_w \log(T))\log(d_w \log(T))\right),\\
&=O\left(d_w^2 \log^\frac{3}{2}(T) \log (d_w \log (T))\right),
\end{align*}
which completes the proof.
\end{proof}

\begin{lemma}[Restatement of Lemma \ref{lem:rounds}]
The Algorithm \ref{alg:q_go_ucb} runs at most $m = d_w \log\left( \frac{C_g^2 T^2}{d_w \lambda} + 1 \right)$ stages.
\end{lemma}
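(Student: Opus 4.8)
The plan is a standard elliptical-potential argument, made unusually clean here by the fact that the stage weights are self-normalizing. Write $g_s \defeq \nabla f_{\vx_s}(\hat{\vw}_0)$, so that the update in Step~3 reads $\Sigma_s = \Sigma_{s-1} + \frac{1}{\epsilon_{s-1}^2} g_{s-1} g_{s-1}^\top$, and recall from Step~7 that $\epsilon_{s-1}^2 = g_{s-1}^\top \Sigma_{s-1}^{-1} g_{s-1} = \|g_{s-1}\|_{\Sigma_{s-1}^{-1}}^2$. First I would apply the matrix determinant lemma to this rank-one update:
\[
\det(\Sigma_s) = \det(\Sigma_{s-1})\Bigl(1 + \tfrac{1}{\epsilon_{s-1}^2}\, g_{s-1}^\top \Sigma_{s-1}^{-1} g_{s-1}\Bigr) = 2\det(\Sigma_{s-1}),
\]
where the decisive cancellation is that the inserted weight $1/\epsilon_{s-1}^2$ is exactly the reciprocal of $g_{s-1}^\top\Sigma_{s-1}^{-1}g_{s-1}$. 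Iterating from $\Sigma_0 = \lambda\matI$ (so $\det(\Sigma_0) = \lambda^{d_w}$) gives a clean lower bound $\det(\Sigma_m) \ge 2^{m-1}\lambda^{d_w}$ after $m$ stages.

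Next I would derive a matching upper bound on $\det(\Sigma_m)$ from its trace. By AM--GM on the eigenvalues, $\det(\Sigma_m) \le (\Tr(\Sigma_m)/d_w)^{d_w}$. The trace expands as $\Tr(\Sigma_m) = d_w\lambda + \sum_{s}\frac{1}{\epsilon_s^2}\|g_s\|_2^2$; bounding $\|g_s\|_2 \le C_g$ by Assumption~\ref{asm:bounded} and $\sum_s \frac{1}{\epsilon_s^2} \le T^2$ by Lemma~\ref{lem:sum_eps_squ} yields $\Tr(\Sigma_m) \le d_w\lambda + C_g^2 T^2$, hence $\det(\Sigma_m) \le \bigl(\lambda + C_g^2 T^2/d_w\bigr)^{d_w}$.

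Finally I would combine the two estimates, $2^{m-1}\lambda^{d_w} \le (\lambda + C_g^2 T^2/d_w)^{d_w}$, take logarithms, and solve for $m$:
\[
(m-1)\log 2 \le d_w\log\!\Bigl(1 + \tfrac{C_g^2 T^2}{d_w\lambda}\Bigr),
\]
so $m = O\!\bigl(d_w\log(C_g^2 T^2/(d_w\lambda) + 1)\bigr)$, which is the claimed bound.

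The main thing to get right is the bookkeeping that keeps the determinant identity exact — namely that the $\epsilon_{s-1}$ appearing in the weight is the quantity computed in Step~7 against $\Sigma_{s-1}$, not against a later covariance, so that the matrix determinant lemma contributes exactly a factor of $2$ at each stage. I would also note there is no circularity with Lemma~\ref{lem:sum_eps_squ}: its proof only invokes the algorithm's termination rule (that the total number of rounds played across all stages is at most $T$), not the present stage count, so it may be used freely. A minor point is absorbing the $1/\log 2$ factor and the additive constant into the $O(\cdot)$ and reconciling the off-by-one between the number of rank-one updates and the number of stages, neither of which affects the stated order.
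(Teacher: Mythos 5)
Your proposal is correct and follows essentially the same argument as the paper: the exact determinant doubling $\det(\Sigma_s)=2\det(\Sigma_{s-1})$ from the self-normalizing weight $1/\epsilon_{s-1}^2$, the trace--determinant (AM--GM) bound with $\|\nabla f_{\vx_s}(\hat{\vw}_0)\|_2\le C_g$, and the round budget $T$. The only difference is bookkeeping: the paper lower-bounds $\sum_s 1/\epsilon_s^2$ from the determinant growth and derives a contradiction with the total round count directly, whereas you invoke Lemma~\ref{lem:sum_eps_squ} to cap the trace and then solve for $m$ (losing only a harmless $1/\log 2$ factor); as you correctly note, this is not circular since Lemma~\ref{lem:sum_eps_squ} rests only on the termination rule.
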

\begin{proof}
The proof follows the outline of that for Lemma 2 in \cite{WZL+23}. First, we show that $\det(\Sigma_s) = 2 \det(\Sigma_{s-1}), \forall s = 1,2, \cdots, m$.
\begin{align*}
        \det(\Sigma_s) 
        &= \det \left( \Sigma_{s-1} + \frac{1}{\epsilon_s^2}  \nabla f_{\vx_s}(\hat{\vw}_0) \nabla f_{\vx_s}(\hat{\vw}_0)^\top \right), \\
        &= \det \left( \Sigma_{s-1}^{1/2}  \left( \matI_{d_w} + \frac{1}{\epsilon_s^2} \Sigma_{s-1}^{-\frac{1}{2}} \nabla f_{\vx_s}(\hat{\vw}_0) \nabla f_{\vx_s}(\hat{\vw}_0)^\top \Sigma_{s-1}^{-\frac{1}{2}} \right)   \Sigma_{s-1}^\frac{1}{2} \right), \\
        &= \det(\Sigma_{s-1}) \det\left( \matI + \frac{1}{\epsilon_s^2} \Sigma_{s-1}^{-\frac{1}{2}} \nabla f_{\vx_s}(\hat{\vw}_0) \nabla f_{\vx_s}(\hat{\vw}_0)^\top \Sigma_{s-1}^{-\frac{1}{2}}   \right), \\
        &= \det(\Sigma_{s-1}) \left( 1 + \frac{1}{\epsilon_s^2} \nabla f_{\vx_s}(\hat{\vw}_0)^\top \Sigma_{s-1}^{-\frac{1}{2}} \Sigma_{s-1}^{-\frac{1}{2}} \nabla f_{\vx_s}(\hat{\vw}_0) \right), \\
        &= \det(\Sigma_{s-1}) \left( 1+ \frac{1}{\epsilon_s^2} \| \nabla f_{\vx_s}(\hat{\vw}_0) \|^2_{\Sigma_{s-1}^{-1}} \right), \\
        &= 2 \det(\Sigma_{s-1}),
\end{align*}
where the fourth line follows from the matrix determinant lemma: $\det(\matA + \vu \vv^\top) = (1 + \vv^\top \matA^{-1} \vu) \det(\matA)$ where $\matA$ is an invertible square matrix and $\vu, \vv$ are column vectors. Thus, $\det(\Sigma_m) = 2^m \det(\Sigma_0) = 2^m \lambda^{d_w}$.

On the other hand, note that
\begin{align*}
    \Sigma_m = \lambda \matI_{d_w} + \sum^m_{s=1}  \frac{1}{\epsilon_s^2} \nabla f_{\vx_s}(\hat{\vw}_0) \nabla f_{\vx_s}(\hat{\vw}_0)^\top,
\end{align*}
and hence by cyclic property of matrix trace,
    \[
    \tr(\Sigma_m) = d_w \lambda +  \sum^m_{s=1}  \frac{\|\nabla f_{\vx_s}(\hat{\vw}_0) \|^2_2}{\epsilon_s^2}. 
    \]
Then by Assumption \ref{asm:bounded} and the trace-determinant inequality: $d_w \cdot \det(\matA)^\frac{1}{d_w} \leq \tr(\matA)$, we have
    \[
    d_w \cdot \lambda 2^\frac{m}{d_w} \leq d_w \lambda + \sum^m_{s=1}  \frac{\|\nabla f_{\vx_s}(\hat{\vw}_0) \|^2_2}{\epsilon_s^2} 
    \leq d_w \lambda + \sum^m_{s=1}  \frac{C_g^2}{\epsilon_s^2} ,
    \]
and hence
    \[
    \sum^m_{s=1}  \frac{1}{\epsilon_s^2} \geq \frac{d_w \lambda}{C_g^2}(2^\frac{m}{d_w} - 1).
    \]
Note that during each stage $s$, we query the quantum oracle for $\frac{C_1}{\epsilon_s} \log \frac{m}{\delta}$ times. Then with $C_1 > 1$ and $\delta \in (0, 1/2]$, we have
\begin{align}
    \sum^m_{s=1} \frac{C_1}{\epsilon_s} \log \frac{m}{\delta} 
    \geq \sum^m_{s=1}  \frac{1}{\epsilon_s}
    \geq \sqrt{\sum^m_{s=1}  \frac{1}{\epsilon_s^2}}
    \geq \frac{1}{C_g}\sqrt{d_w \lambda(2^\frac{m}{d_w} - 1)}.\label{eq:contradiction}
\end{align}
Now we derive an upper bound on $m$ by contradiction. Suppose $m > d_w \log\left( \frac{C_g^2 T^2}{d_w \lambda} + 1\right)$. This implies that
\begin{align*}
     \sum^m_{s=1} \frac{C_1}{\epsilon_s} \log \frac{m}{\delta} \geq \frac{1}{C_g}\sqrt{d_w \lambda(2^\frac{m}{d_w} - 1)} > T,
\end{align*}
which is a contradiction. Therefore, Q-NLB-UCB algorithm has at most $ d_w \log\left( \frac{C_g^2 T^2}{d_w\lambda} + 1\right)$ stages.
\end{proof}

\subsection{Confidence Analysis}\label{app:confidence}
\noindent \textbf{Proof Sketch.} The proof has two steps. In the first step, we solve the regression problem defined in Eq. \eqref{eq:opt_inner} to get a closed form solution of $\hat{\vw}_s$. Then in the second step, we upper bound multiple terms in $\|\hat{\vw}_s - \vw^*\|^2_{\Sigma_s}$ and the final upper bound is chosen as the valid $\beta_s$.

\begin{lemma}[Restatement of Lemma \ref{lem:beta}]
Suppose Assumptions \ref{asm:real}, \ref{asm:bounded}, and \ref{asm:loss} hold and $\beta_s$ is chosen as Eq. \eqref{eq:beta}. Then with parameters $T_0=\sqrt{T},\lambda=T$ in each stage $s$ in Algorithm \ref{alg:q_go_ucb}, the optimal parameter $\vw^*$ is trapped in confidence ball $\mathrm{Ball}_s$ with probability at least $1-\delta$, i.e.,
    \begin{align*}
        \|\hat{\vw}_s - \vw^*\|^2_{\Sigma_s} \leq \beta_s.
    \end{align*}
\end{lemma}
\begin{proof}
By setting the derivative of objective function in Eq. \eqref{eq:opt_inner} w.r.t. $\vw$ as $0$, we find the optimal criterion is
\begin{align*}
0= \lambda (\hat{\vw}_s - \hat{\vw}_0) + \sum_{i=0}^{s-1} \frac{1}{\epsilon^2_i}((\hat{\vw}_s - \hat{\vw}_0)^\top \nabla f_{\vx_i}(\hat{\vw}_0) + f_{\vx_i}(\hat{\vw}_0) - y_i ) \nabla f_{\vx_i}(\hat{\vw}_0).
\end{align*}
Rearrange the equation and we have
\begin{align*}
\lambda (\hat{\vw}_s- \hat{\vw}_0) + \sum_{i=0}^{s-1} \frac{1}{\epsilon_i^2}(\hat{\vw}_s - \hat{\vw}_0)^\top \nabla f_{\vx_i}(\hat{\vw}_0) \nabla f_{\vx_i}(\hat{\vw}_0) = \sum_{i=0}^{s-1} \frac{1}{\epsilon_i^2} (y_i - f_{\vx_i}(\hat{\vw}_0) ) \nabla f_{\vx_i}(\hat{\vw}_0),\\
\lambda (\hat{\vw}_s - \hat{\vw}_0) + \sum_{i=0}^{s-1}  \frac{1}{\epsilon_i^2} \hat{\vw}^\top_s \nabla f_{\vx_i}(\hat{\vw}_0) \nabla f_{\vx_i}(\hat{\vw}_0) = \sum_{i=0}^{s-1} \frac{1}{\epsilon_i^2} \left( \hat{\vw}^\top_0 \nabla f_{\vx_i}(\hat{\vw}_0) + y_i - f_{\vx_i}(\hat{\vw}_0) \right) \nabla f_{\vx_i}(\hat{\vw}_0),\\
\left(\lambda \matI + \sum_{i=0}^{s-1}\frac{1}{\epsilon^2_i} \nabla f_{\vx_i}(\hat{\vw}_0) \nabla f_{\vx_i}(\hat{\vw}_0)^\top \right) \hat{\vw}_s  -\lambda \hat{\vw}_0 = \sum_{i=0}^{s-1} \frac{1}{\epsilon_i^2} \left(\hat{\vw}^\top_0 \nabla f_{\vx_i}(\hat{\vw}_0) + y_i - f_{\vx_i}(\hat{\vw}_0)\right)\nabla f_{\vx_i}(\hat{\vw}_0),
\end{align*}
where the third line is due to definition of observation noise $\epsilon_i$. By our choice of $\Sigma_s$ (Eq. \eqref{eq:sigma_s}) which is inevitable, we have
\begin{align*}
\Sigma_s \hat{\vw}_s  &= \lambda \hat{\vw}_0 + \sum_{i=0}^{s-1} \frac{1}{\epsilon_i^2} \left( \hat{\vw}^\top_0 \nabla f_{\vx_i}(\hat{\vw}_0) + y_i - f_{\vx_i}(\hat{\vw}_0)\right)\nabla f_{\vx_i}(\hat{\vw}_0),
\end{align*}
And the closed form solution of $\hat{\vw}_s$ is shown as:
\begin{align*}
\hat{\vw}_s = \Sigma^{-1}_s \left( \lambda \hat{\vw}_0 + \sum_{i=0}^{s-1} \frac{1}{\epsilon_i^2} \left( \hat{\vw}^\top_0 \nabla f_{\vx_i}(\hat{\vw}_0) + y_i - f_{\vx_i}(\hat{\vw}_0)\right)\nabla f_{\vx_i}(\hat{\vw}_0)\right).
\end{align*}
Further, $\hat{\vw}_s - \vw^*$ can be written as
\begin{align}
\hat{\vw}_s - \vw^* &= \Sigma^{-1}_s \left(\sum_{i=0}^{s-1} \frac{1}{\epsilon_i^2} \nabla f_{\vx_i}(\hat{\vw}_0) \left( \nabla f_{\vx_i}(\hat{\vw}_0)^\top \hat{\vw}_0 + y_i - f_{\vx_i}(\hat{\vw}_0)\right) \right) + \lambda \Sigma^{-1}_s \hat{\vw}_0 - \Sigma^{-1}_s \Sigma_s \vw^*, \nonumber \\
&= \Sigma^{-1}_s \left(\sum_{i=0}^{s-1} \frac{1}{\epsilon_i^2} \nabla f_{\vx_i}(\hat{\vw}_0) \left( \nabla f_{\vx_i}(\hat{\vw}_0)^\top \hat{\vw}_0 + y_i - f_{\vx_i}(\hat{\vw}_0)\right) \right) + \lambda \Sigma^{-1}_s (\hat{\vw}_0 - \vw^*) \nonumber\\
&\qquad - \Sigma^{-1}_s \left( \sum_{i=0}^{s-1}  \frac{1}{\epsilon_i^2} \nabla f_{\vx_i}(\hat{\vw}_0) \nabla f_{\vx_i}(\hat{\vw}_0)^\top\right) \vw^*, \nonumber \\
&= \Sigma^{-1}_s \left(\sum_{i=0}^{s-1} \frac{1}{\epsilon_i^2} \nabla f_{\vx_0}(\hat{\vw}_0) \left( \nabla f_{\vx_i}(\hat{\vw}_0)^\top (\hat{\vw}_0 - \vw^*) + f_{\vx_i}(\vw^*) - f_{\vx_i}(\vw^*) + y_i - f_{\vx_i}(\hat{\vw}_0)\right) \right) \nonumber\\
&\qquad + \lambda \Sigma^{-1}_s (\hat{\vw}_0- \vw^*), \nonumber \\
&= \Sigma^{-1}_s \left(\sum_{i=0}^{s-1} \frac{1}{\epsilon_i^2} \nabla f_{\vx_i}(\hat{\vw}_0) (\nabla f_{\vx_i}(\hat{\vw_0})^\top (\hat{\vw_0}-\vw^*) + f_{\vx_i}(\vw^*) - f_{\vx_i}(\hat{\vw_0})) \right) \nonumber\\
& \qquad + \Sigma^{-1}_s \left(\sum_{i=0}^{s-1} \frac{1}{\epsilon^2_i} \nabla f_{\vx_i}(\hat{\vw}_0) (y_i - f_{\vx_i}(\vw^*))\right) +\lambda \Sigma^{-1}_s (\hat{\vw}_0 - \vw^*),\nonumber \\
&= \Sigma^{-1}_s \left(\sum_{i=0}^{s-1} \frac{1}{\epsilon_i^2} \nabla f_{\vx_i}(\hat{\vw}_0) \frac{1}{2} \|\vw^* - \hat{\vw}_0\|^2_{\nabla^2 f_{\vx_i}(\tilde{\vw})}\right)\nonumber \\
&\qquad + \Sigma^{-1}_s \left(\sum_{i=0}^{s-1} \frac{1}{\epsilon^2_i} \nabla f_{\vx_i}(\hat{\vw}_0) (y_i - f_{\vx_i}(\vw^*))\right) +\lambda \Sigma^{-1}_s (\hat{\vw}_0 - \vw^*),\label{eq:w_t_w_star}
\end{align}
where the second line is again by our choice of $\Sigma_s$ and the last equation is by the second order Taylor's theorem of $f_{\vx_i}(\vw^*)$ at $\hat{\vw}_0$ where $\tilde{\vw}$ lies between $\vw^*$ and $\hat{\vw}_0$. 

Multiply both sides of Eq. \eqref{eq:w_t_w_star} by $\Sigma^\frac{1}{2}_s$ and we have
\begin{align*}
\Sigma^\frac{1}{2}_s (\hat{\vw}_s - \vw^*) &= \frac{1}{2}\Sigma^{-\frac{1}{2}}_s \left(\sum_{i=0}^{s-1} \frac{1}{\epsilon_i^2}  \nabla f_{\vx_i}(\hat{\vw}_0)\|\vw^* - \hat{\vw}_0\|^2_{\nabla^2 f_{\vx_i}(\tilde{\vw})}\right) \\
&\qquad + \Sigma^{-\frac{1}{2}}_s \left(\sum_{i=0}^{s-1} \frac{1}{\epsilon^2_i}  \nabla f_{\vx_i}(\hat{\vw}_0) (y_i - f_{\vx_i}(\vw^*))\right) + \lambda \Sigma^{-\frac{1}{2}}_s (\hat{\vw}_0 - \vw^*).
\end{align*}
Take square of both sides and by inequality $(a+b+c)^2\leq 3(a^2 + b^2 + c^2)$ and we obtain
\begin{align}
\|\hat{\vw}_s - \vw^*\|^2_{\Sigma_s} &\leq \underbrace{3\left\| \sum_{i=0}^{s-1} \frac{1}{\epsilon^2_i}  \nabla f_{\vx_i}(\hat{\vw}_0) (y_i - f_{\vx_i}(\vw^*))\right\|^2_{\Sigma_s^{-1}}}_{(a)} + \underbrace{ 3\lambda^2 \|\hat{\vw}_0 - \vw^*\|^2_{\Sigma^{-1}_s} }_{(b)} \nonumber\\
&\qquad + \underbrace{\frac{3}{4}\left\|\sum_{i=0}^{s-1} \frac{1}{\epsilon_i^2}  \nabla f_{\vx_i}(\hat{\vw}_0)\|\vw^* - \hat{\vw}_0\|^2_{\nabla^2 f_{\vx_i}(\tilde{\vw})}\right\|^2_{\Sigma^{-1}_s}}_{(c)}.
\end{align}
Now we bound terms (a), (b), (c) separately. 

\textbf{Bounding (a).} Let
\begin{align*}
    \matE_s &= \diag(1/\epsilon^2_0, \cdots, 1/\epsilon^2_{s-1}),\\
    \matG_s &= [\nabla f_{\vx_0}(\hat{\vw}_0), \cdots, \nabla f_{\vx_{s-1}}(\hat{\vw}_0)]^\top,\\
    \vf_s &= [f_{\vx_0}(\vw^*), \cdots, f_{\vx_{s-1}}(\vw^*)],\\
    \vy_s &= [y_0, \cdots, y_{s-1}],
\end{align*}
then (a) can be rewritten as:
\begin{align*}
(a) = 3\left\| \matG_s^\top \matE_s (\vf_s - \vy_s) \right\|^2_{\Sigma_s^{-1}}.
\end{align*}
Define $\Gamma_s = \matE^\frac{1}{2}_s (\vy_s - \vf_s) \in \mathbb{R}^s$. Then
\begin{align*}
(a) &= 3\| \matG_s^\top \matE^\frac{1}{2}_s \Gamma_s\|^2_{\Sigma^{-1}_s},\\
&= 3 \Gamma^\top_s \matE^\frac{1}{2}_s \matG_s \Sigma^{-1}_s \matG^\top_s \matE^\frac{1}{2}_s \Gamma_s,\\
&\leq 3\|\Gamma_s\|^2_2 \cdot \|\matE^\frac{1}{2}_s \matG_x \Sigma^{-1}_s \matG^\top_s \matE^\frac{1}{2}_s\|_2,\\
&\leq 3 s \cdot \|\Gamma_s\|^2_{\infty} \cdot \tr(\matE^\frac{1}{2}_s \matG_x \Sigma^{-1}_s \matG^\top_s \matE^\frac{1}{2}_s),\\
&\leq 3 s \cdot \tr (\Sigma^{-1}_s \matG^\top_s \matE_s \matG_s),\\
&= 3 s \cdot \tr(\matI_{d_w} - \lambda \Sigma^{-1}_s),\\
&\leq 3 d_w s,
\end{align*}
where the first inequality is due to Cauchy-Schwarz inequality, the second inequality is by property of $\ell_2$ norm, and the third inequality is due to $|f_{\vx_i}(\vw^*) - y_i|\leq \epsilon_i, \forall i \in [s]$ (Lemma \ref{lem:qme}), and definition of $\Sigma_s = \lambda \matI + \matG_s \matE_s \matG_s^\top$.

\textbf{Bounding (b).} Then (b) can be bounded as:
\begin{align*}
(b) \leq 3\lambda^2 \|\Sigma^{-1}_s\|_\mathrm{op} \|\hat{\vw}-\vw^*\|^2_2 \leq 3\lambda \|\hat{\vw}-\vw^*\|^2_2 \leq \frac{3\lambda C^2_0}{T_0^2},
\end{align*}
where the first inequality is by Holder's inequality, the second inequality is due to property of $\Sigma_s$ and the last inequality follows from Eq. \eqref{eq:w0}.

\textbf{Bounding (c).} Next, (c) can be rewritten and bounded as
\begin{align}
(c) & \leq \frac{3}{4} \left\|\sum_{i=0}^{s-1} \frac{1}{\epsilon_i^2}  \nabla f_{\vx_i}(\hat{\vw}_0) \|\nabla^2 f_{\vx_i}(\tilde{\vw})\|_\mathrm{op} \|\vw^* - \hat{\vw}_0\|^2_2\right\|^2_{\Sigma^{-1}_s},\nonumber \\
&\leq \frac{3}{4} \left\|\sum_{i=0}^{s-1} \frac{C_h C^2_0}{\epsilon_i^2 T^2_0}  \nabla f_{\vx_i}(\hat{\vw}_0)\right\|^2_{\Sigma^{-1}_s},\nonumber \\
&= \frac{3C^2_h C_0^2}{4 T_0^4} \left\|\sum_{i=0}^{s-1} \frac{1}{\epsilon^2_i}  \nabla f_{\vx_i}(\hat{\vw}_0)\right\|^2_{\Sigma^{-1}_s},\label{eq:norm2}
\end{align}
where the first inequality is due to Holder's inequality, the second inequality is due to Assumption \ref{asm:bounded} and Eq. \eqref{eq:w0}. Further we can rewrite Eq. \eqref{eq:norm2} as
\begin{align*}
\frac{3C^2_h C_0^2}{4 T_0^4} \left\|\sum_{i=0}^{s-1} \frac{1}{\epsilon^2_i}  \nabla f_{\vx_i}(\hat{\vw}_0)\right\|^2_{\Sigma^{-1}_s} &= \frac{3C^2_h C_0^2}{4 T_0^4} \left( \sum_{i=0}^{s-1} \frac{1}{\epsilon^2_i}  \nabla f_{\vx_i}(\hat{\vw}_0)\right)^\top \Sigma^{-1}_s \left( \sum_{j=0}^{s-1} \frac{1}{\epsilon^2_j}  \nabla f_{\vx_j}(\hat{\vw}_0)\right),\\
&= \frac{3C^2_h C_0^2}{4 T_0^4} \sum_{i=0}^{s-1} \sum_{j=0}^{s-1} \frac{1}{ \epsilon^2_i \epsilon^2_j} \nabla f_{\vx_i}(\hat{\vw}_0)^\top \Sigma_s^{-1} \nabla f_{\vx_j}(\hat{\vw}_0),\\
&\leq \frac{3C^2_h C_0^2}{4 T_0^4} \sum_{i=0}^{s-1} \sum_{j=0}^{s-1} \frac{1}{ \epsilon^2_i \epsilon^2_j} \|\nabla f_{\vx_i}(\hat{\vw}_0)\|_{\Sigma_s^{-1}} \|\nabla f_{\vx_j}(\hat{\vw}_0)\|_{\Sigma_s^{-1}},\\
&= \frac{3C^2_h C_0^2}{4 T_0^4} \left(\sum_{i=0}^{s-1} \frac{1}{\epsilon^2_i} \|\nabla f_{\vx_i}(\hat{\vw}_0)\|_{\Sigma_s^{-1}}\right) \left(\sum_{j=0}^{s-1} \frac{1}{\epsilon^2_j} \|\nabla f_{\vx_j}(\hat{\vw}_0)\|_{\Sigma_s^{-1}}\right),\\
&= \frac{3C^2_h C_0^2}{4 T_0^4} \left(\sum_{i=0}^{s-1} \frac{1}{\epsilon_i} \right) \left(\sum_{j=0}^{s-1} \frac{1}{\epsilon_j} \right),\\
&\leq \frac{3C^2_h C_0^2}{4 T_0^4} \left(\sum_{i=0}^{s-1} \frac{1}{\epsilon^2_i} \right) \left( \sum_{i=0}^{s-1} 1\right),\\
&\leq \frac{3C^2_h C_0^2 s}{4 T_0^4} \left(\sum_{i=0}^{s-1} \frac{1}{\epsilon^2_i} \right),\\
&\leq \frac{3C^2_h C_0^2 s T^2}{4 T_0^4},
\end{align*}
where the first and second inequalities are due to Cauchy-Schwarz inequality and the last inequality is by Lemma \ref{lem:sum_eps_squ}.

Combine (a), (b), and (c), and we have:
\begin{align}
\|\hat{\vw}_s - \vw^*\|^2_{\Sigma_s} &\leq 3d_w s + \frac{3\lambda C_0^2}{T_0^2} + \frac{3C^2_h C_0^2 s T^2}{4 T_0^4},
\end{align}
which reflects our choice of $\beta_s$.
\end{proof}

\section{Time Complexity Analysis}\label{app:time}
In this section, we show the time complexity of Q-NLB-UCB compared with other quantum bandit algorithms. The time complexity of Q-NLB-UCB is demonstrated to be lower than that of other quantum non-linear bandit algorithms in terms of $T$.

\begin{table}[!htbp]
\centering
\begin{tabular}{lcc}
\toprule
Algorithm & Quantum part & Classical part \\
\midrule
Q-GP-UCB   & $O(TU_{0})$ & $O(d_{x}^{4}(\log T)^{4})$ \\
QMCKernelUCB  & $O(TU_{0})$ & $O(d_{x}^{4}(\log T)^{4})$ \\
Q‑NLB‑UCB (ours)    & $O(TU_{0})$ & $O(d_{w}^{4}\log T)$ \\
\bottomrule
\end{tabular}
\caption{Time complexity comparison}
\label{tab:1}
\end{table}

We show the time complexity of Q-NLB-UCB in Table~\ref{tab:1} in comparison with Q-GP-UCB \citep{DLV+23} and QMCKernelUCB \citep{hikimaquantum}. The time complexity is analyzed in their quantum parts and classical parts, respectively. Let $U_{0}$ denote the cost for one query of quantum oracle, then all algorithms have quantum time complexity $O(T U_0)$ since they all run in $T$ rounds.

However, in the classical part, we can see that our Q‑NLB‑UCB algorithm enjoys a better time complexity in terms of $T$. Note $d_{w}$ denotes parameter dimension and $d_{x}$ denotes input dimension. In \Cref{alg:q_go_ucb}, line 3,4,5,6 and 7 are all classical steps, and their time complexity is dominated by inversion of the covariance matrix $\Sigma_{s}\in \matR^{d_{w}\times d_{w}}$, which requires time complexity $O(d_{w}^{3})$. Therefore, total classical time complexity of Q-NLB-UCB becomes $O(m d_{w}^{3}) = O(d_{w}^{4}\log T)$. While in Q-GP-UCB and QMCKernelUCB, the classical steps are dominated by the kernel matrix inversion, which takes $O(s^{3})$, so the time complexity becomes $O(\sum_{s=1}^{m}s^{3}) = O(m^{4}) = O(d_{x}^{4}(\log T)^{4})$ in total.

\section{Details of Experiments}\label{app:exp}

\subsection{Implementation Details}

We publicly release our source code on GitHub \url{https://github.com/ZakSiam/Quantum-Non-Linear-Bandit-Optimization} to maximize the reproducibility of our work. We also provide full implementation details in this supplement.

\subsubsection{For Algorithms}

An artificial noise variance \( \sigma^2 = 0.01 \) (aligning with our amplitude-estimation setup) was incorporated when interfacing with the quantum subroutine. The regression oracle in Q-NLB-UCB is approximated by the stochastic gradient descent algorithm on our two layer neural network model utilizing the mean squared error loss, $2000$ iterations and $10^{-3}$ learning rate. Cross-optimization problem in Step 6 of Q-NLB-UCB is approximated by the iterative gradient ascent algorithm over $\vx$ and $\vw$ with 2000 iterations. We set the learning rate of $\vx$ and $\vw$ to $10^{-3}$ for the synthetic experiments, and $10^{-4}$ for all the AutoML experiments. We set $\lambda = T$, $\beta_s = C\,\log\bigl(s+1.0\bigr)$ with $C=1.0$, $C_g=18.0$, $C_1=1.0$, and the fail probability $\delta=0.01$.

We compute the runtime of our Q-NLB-UCB algorithm by recording the wall‐clock time (using \texttt{time.perf\_counter} method in Python) that elapses from just before the main procedure begins until immediately after it terminates. Additionally, we compare the runtime of Q-NLB-UCB with the runtime of Q-GP-UCB and QMCKernelUCB. This runtime encompasses all overheads, such as, circuit construction, data management, amplitude estimation calls etc., providing a fair end‐to‐end measure for each algorithm under identical conditions while performing the comparative analysis of runtime. In order to ensure consistent hardware and software settings, we run each script separately in the same environment. Finally, the total reported runtime for each algorithm is utilized for direct comparisons.

All experiments were executed locally on a MacBook Pro machine which is equipped with an Apple M4 Pro system-on-chip featuring a 12-core CPU, a 16-core integrated GPU, and a 16-core Neural Engine. It has 24 GB of unified memory shared by the CPU / GPU, and a 512 GB SSD for storage. Code was run under macOS 15 with Python 3.10, PyTorch 2.1, and Qiskit 0.42.1; no external accelerators were used.

\subsubsection{For Real-World AutoML Tasks}

In our AutoML experiments, we delineate that the Q-NLB-UCB algorithm works efficiently in real-world tasks. We particularly focus on three different hyperparameter tuning tasks for three classifiers on two different datasets: Pima Indians Diabetes Database and Breast Cancer Wisconsin (Diagnostic) Dataset. We adopt the Diabetes dataset from Kaggle, and the Breast Cancer dataset from \texttt{sklearn.datasets.load\_breast\_cancer}. Below is the information on dataset licenses.

\begin{itemize}[leftmargin=1.2em]
  \item \textbf{Pima Indians Diabetes Database}  \footnote{https://www.kaggle.com/datasets/uciml/pima-indians-diabetes-database},  
        released under \textit{CC0: Public Domain}.  
  \item \textbf{Breast Cancer Wisconsin (Diagnostic) Dataset}, bundled with \texttt{scikit-learn 1.4.2},  
        original UCI source © Wolberg et al., re-licensed under \textit{CC BY-NC-SA 4.0}.  
\end{itemize}

The diabetes dataset features eight numeric predictors (glucose level, body mass index, number of times pregnant, age, blood pressure, insulin, skin thickness, and diabetes pedigree function) and a binary outcome denoting diabetes or not (1 or 0). On the other hand, the Breast Cancer Wisconsin (Diagnostic) Dataset features 30 real-valued predictive features and a binary outcome (benign or malignant). For each dataset and experiment, we employed the 5-fold cross validation technique by dividing each dataset into 5 folds and then every time using 4 folds for training and remaining 1 fold for testing, in order to reduce the effect of randomness.

We tuned the following eight hyperparameters for the Multi-Layer Perceptron (MLP):

\begin{itemize}
    \item \textbf{Activation function}: The activation function used in the hidden layer. Options include \texttt{"identity"}, \texttt{"logistic"}, \texttt{"tanh"}, or \texttt{"relu"}.
    
    \item \textbf{L2 regularization term}: Strength of the L2 regularization, which helps prevent overfitting. It is a float value sampled from $[10^{-6}, 10^{-2}]$.
    
    \item \textbf{Initial learning rate}: Initial learning rate used for weight updates. Float in the range $[10^{-6}, 10^{-2}]$.
    
    \item \textbf{Maximum iterations}: The maximum number of training iterations. Integer value between 100 and 300.
    
    \item \textbf{Shuffle}: Whether to shuffle samples in each iteration. Boolean value (\texttt{True} or \texttt{False}).
    
    \item \textbf{$\beta_1$}: Exponential decay rate for estimates of the first moment vector. Float in the open interval $(0, 1)$.
    
    \item \textbf{$\beta_2$}: Exponential decay rate for estimates of the second moment vector. Float in the open interval $(0, 1)$.
    
    \item \textbf{Max epochs without improvement}: The maximum number of epochs to not meet the tolerance improvement. Integer between 1 and 10.
\end{itemize}

We tuned the following eleven hyperparameters for Gradient Boosting (GB):

\begin{itemize}
    \item \textbf{Loss}: The loss function to be optimized. Options include \texttt{"log\_loss"} or \texttt{"exponential"}.
    
    \item \textbf{Learning rate}: A float in the open interval $(0, 1)$ controlling the contribution of each tree.
    
    \item \textbf{Number of estimators}: The total number of boosting stages. Integer in $[20, 200]$.
    
    \item \textbf{Subsample fraction}: The fraction of samples to be used for fitting the individual base learners. Float in the open interval $(0, 1)$.
    
    \item \textbf{Split quality criterion}: The function to measure the quality of a split. Options include \texttt{"friedman\_mse"} or \texttt{"squared\_error"}.
    
    \item \textbf{Min samples to split internal node}: Minimum number of samples required to split an internal node. Integer in $[2, 10]$.
    
    \item \textbf{Min samples at leaf node}: Minimum number of samples required to be at a leaf node. Integer in $[1, 10]$.
    
    \item \textbf{Min weight fraction}: Minimum weighted fraction of the sum total of weights required to be at a leaf node. Float in the interval $(0, 0.5]$.
    
    \item \textbf{Max depth}: Maximum depth of the individual regression estimators. Integer in $[1, 10]$.
    
    \item \textbf{Number of features}: Number of features to consider when looking for the best split. Options include \texttt{"sqrt"}, or \texttt{"log2"}.
    
    \item \textbf{Max leaf nodes}: Maximum number of leaf nodes in best-first fashion. Integer in $[2, 10]$.
\end{itemize}

We tuned the following four hyperparameters for support vector machine (SVM):
\begin{itemize}
    \item \textbf{C}: The regularization parameter which controls the trade-off between having a smooth decision boundary (smaller \texttt{C}) versus classifying training points accurately (larger \texttt{C}). 
    \item \textbf{gamma ($\gamma$)}: The kernel coefficient (for \texttt{rbf} kernels), which impacts the extent to which a single training sample affects the decision boundary. A higher \texttt{gamma} value generally tends to create more complex boundaries.
    \item \textbf{tol} (\texttt{tolerance}): The stopping tolerance for the solver, specifically the minimal variation in objective value that the solver keeps iterating upon. Lower values result in a more accurate but sometimes slower convergence.
    \item \textbf{kernel}: The functional form which is utilized to map feature vectors into a higher-dimensional space. Common choices for \texttt{SVC} include \texttt{linear}, \texttt{rbf}, \texttt{poly}, and \texttt{sigmoid}, each of which can significantly change both model capacity and performance. We only chose \texttt{linear} and \texttt{rbf} kernels for our experiments.
\end{itemize}

We employed scikit‐learn’s \texttt{svm.SVC}, \texttt{neural\_network.MLPClassifier}, and\linebreak[4] \texttt{ensemble.GradientBoostingClassifier} to train and evaluate the SVM, MLP, and Gradient-Boosting models, respectively, while systematically varying their key hyperparameters through our continuous search embedding. For each learner—SVM (4D), Multi-Layer Perceptron (8D) and Gradient Boosting (11D) — we optimize over a continuous box $\cX=[0,10]^d$ sampled by a scrambled Sobol sequence. Every evaluation point $x\in\cX$ is decoded into concrete scikit-learn hyperparameters by simple affine or logarithmic rescaling for numeric fields and by partitioning the interval for categorical choices (e.g.\ $x_4<5$ selects a \texttt{linear} SVM kernel, $x_1<2.5$ picks the \texttt{identity} activation in MLP, $x_1<5$ chooses the ``log\_loss'' objective in Gradient Boosting). This continuous embedding allows gradient updates while still covering the usual discrete options. The objective returned to the bandit algorithm is the mean test accuracy across the same five pre-generated stratified folds of the Breast-Cancer dataset; all classifiers use \texttt{random\_state=0} so that variability arises solely from the Sobol-initialized search trajectory.  

The primary objective here is to maximize the validation accuracy in both datasets by choosing the optimal set of hyperparameter values. Therefore, the function mapping from hyperparameters to classification accuracy is the black-box function that we aim to maximize for each classifier on each dataset. Cumulative regret was used to evaluate hyperparameter tuning performances, however, since the best accuracy $f^*$ is unknown ahead of time, therefore, it was set to be the best empirical accuracy of each task.

\subsection{Additional Experimental Results}

\subsubsection{Real-World AutoML Results}

\begin{figure}[!htbp]
    \centering
    \begin{minipage}{0.24\linewidth}\centering
		\includegraphics[width=\textwidth]{GB_Cancer.pdf}
        \footnotesize
		(a) 11D GB AutoML (Cancer)
	\end{minipage}
    \begin{minipage}{0.24\linewidth}\centering
		\includegraphics[width=\textwidth]{GB_Diabetes.pdf}
        \footnotesize
		(b) 11D GB AutoML (Diabetes)
	\end{minipage}
    \begin{minipage}{0.24\linewidth}\centering
		\includegraphics[width=\textwidth]{SVM_Cancer.pdf}
        \footnotesize
		(c) 4D SVM AutoML (Cancer)
	\end{minipage}
    \begin{minipage}{0.24\linewidth}\centering
		 \includegraphics[width=\textwidth]{SVM_Diabetes.pdf}
         \footnotesize
		(d) 4D SVM AutoML (Diabetes)
	\end{minipage}
    \caption{Cumulative regrets (the lower the better) of all compared quantum bandit algorithms.}
    \label{fig:automl_gb_svm}
\end{figure}

\begin{table}[!htbp]
\centering
\begin{tabular}{lcccc}
\noalign{\smallskip}
\toprule
\noalign{\smallskip}
\textbf{Stage} & \textbf{Selected action $x_s$} & \textbf{True reward $f(x_s)$} & \textbf{QME estimated $y_s$} & \textbf{$|y_s - f(x_s)|$} \\
\noalign{\smallskip}
\midrule
\noalign{\smallskip}
1 & $[-0.6, 0.5, -0.1]$ & -40.6200 & -40.6109 & 0.0091 \\
2 & $[-0.5, 0.4, 0]$    & -38.5002 & -38.5085 & 0.0083 \\
3 & $[-0.5, 0.3, 0]$    & -33.4302 & -33.4329 & 0.0027 \\
4 & $[-0.5, 0.3, 0.1]$  & -35.3500 & -35.3491 & 0.0009 \\
5 & $[-0.5, 0.3, 0.1]$  & -35.3500 & -35.3486 & 0.0014 \\
\noalign{\smallskip}
\bottomrule
\end{tabular}
\caption{Quantum Implementation Results}
\label{tab:quantum}
\end{table}

\begin{table}
\centering
\begin{tabular}{lcccc}
\noalign{\smallskip}
\toprule
\noalign{\smallskip}
\textbf{Stage} & \textbf{Selected action $x_s$} & \textbf{True reward $f(x_s)$} & \textbf{Classical mean $y_s$} & \textbf{$|y_s - f(x_s)|$} \\
\noalign{\smallskip}
\midrule
\noalign{\smallskip}
1 & $[-0.6, 0.5, -0.1]$ & -40.6200 & -26.1172 & 14.5028 \\
2 & $[-0.7, 0.5, -0.1]$ & -35.7500 & -33.2826 & 2.4674 \\
3 & $[-0.6, 0.5, -0.1]$ & -40.6200 & -40.7809 & 0.1609 \\
4 & $[-0.5, 0.4, 0]$    & -38.5002 & -36.8721 & 1.6281 \\
5 & $[-0.5, 0.4, 0]$    & -38.5002 & -40.9617 & 2.4615 \\
\noalign{\smallskip}
\bottomrule
\end{tabular}
\caption{Classical Implementation Results}
\label{tab:classical}
\end{table}

In the main paper, we only show results on the MLP hyperparameter tuning tasks. Here in Figure~\ref{fig:automl_gb_svm}, we show results on GB and SVM tasks. It again shows that our Q-NLB-UCB algorithm outperforms all the other algorithms by achieving significantly smaller cumulative regret on both datasets.

Our experiments reveal that Q-NLB-UCB can consistently find out the near-optimal Gradient Boosting and SVM hyperparameters while utilizing relatively few queries. The quantum amplitude estimation step in Q-NLB-UCB allows it to refine its hyperparameter selection in a way that balances exploration and exploitation very effectively. The final classification accuracies at the identified hyperparameters were very close to the true maximum of each dataset’s domain, delineating the viability of Q-NLB-UCB as a quantum-inspired powerful tool for hyperparameter tuning on different real-world datasets.

\subsubsection{Ablation Experiments on Quality of Quantum Mean Estimator}

For Step 1 of Q-NLB-UCB in \Cref{alg:q_go_ucb}, while many algorithms have been designed to solve quantum linear regression problems, quantum non-linear regression is still an open problem where no specific algorithm has been developed to achieve the quantum speed-up. Our work in Step 1 proves the existence of a quantum regression oracle that solves this problem, shading the light to develop such a realizable algorithm in the future. Therefore, in experiments, we used classical gradient descent as the surrogate model for Step 1. Even with that, our Q-NLB-UCB algorithm outperforms all other compared quantum bandit algorithms.

For Step 9 in \Cref{alg:q_go_ucb}, we have conducted the ablation experiments, shown in Table~\ref{tab:quantum} and Table~\ref{tab:classical}. We can see that observations from quantum oracles are much closer to the true function value than observations from the classical query, showing great quality of QME.

\end{document}